\newtheorem{theorem}{Theorem}
\newtheorem{lemma}[theorem]{Lemma}
\newtheorem{proposition}{Proposition}
\declaretheoremstyle[notefont=\bfseries,notebraces={}{},%
    headpunct={},postheadspace=1em]{mystyle}
\declaretheorem[style=mystyle,numbered=no,name=Theorem]{thm-hand}
\declaretheorem[style=mystyle,numbered=no,name=Lemma]{lem-hand}
\newcommand{\inspace}{\ensuremath{\mathcal{X}}}   
\newcommand{\pp}[1]{\ensuremath{\mathbb{#1}}}     
\newcommand{\hbspace}{\ensuremath{\mathscr{H}}}   
\newcommand{\empmm}{\ensuremath{\widehat{\mu}}}
\newcommand{\rr}{\mathbb{R}} 		          
\newcommand{\ep}{\mathbb{E}}                      
\newcommand{\kmat}{\mathbf{K}}                  
\newcommand{\bvec}{\bm{\beta}}                  
\newcommand{\id}{\mathbf{I}}
\newcommand{\dd}{\, \mathrm{d}}
\title{Kernel Mean Estimation and Stein's Effect}
\date{\today}
\author{Krikamol Muandet}
\address{Empirical Inference Department,  
  MPI for Intelligent Systems \\
  \texttt{\Large \href{mailto:krikamol@tuebingen.mpg.de}{krikamol@tuebingen.mpg.de}}}
\author{Kenji Fukumizu}
\address{The Institute of Statistical Mathematics \\
  \texttt{\Large \href{mailto:fukumizu@ism.ac.jp}{fukumizu@ism.ac.jp}}}
\author{Bharath Sriperumbudur}
\address{Statistical Laboratory, University of Cambridge \\
\texttt{\Large \href{mailto:bs493@statslab.cam.ac.uk}{bs493@statslab.cam.ac.uk}}}
\author{Arthur Gretton}
\address{Gatsby Computational Neuroscience Unit, University College
  London \\
  \texttt{\Large \href{mailto:arthur.gretton@gmail.com}{arthur.gretton@gmail.com}}}
\author{Bernhard Sch\"{o}lkopf}
\address{Empirical Inference Department, MPI for
  Intelligent Systems \\
  \texttt{\Large \href{mailto:bs@tuebingen.mpg.de}{bs@tuebingen.mpg.de}}}
\begin{document}

\maketitle

\begin{abstract}
  A mean function in reproducing kernel Hilbert space, or a kernel mean, is an important part of many applications ranging from kernel principal component analysis to Hilbert-space embedding of distributions. Given finite samples, an empirical average is the standard estimate for the true kernel mean. We show that this estimator can be improved via a well-known phenomenon in statistics called Stein's phenomenon. After consideration, our theoretical analysis reveals the existence of a wide class of estimators that are better than the standard. Focusing on a subset of this class, we propose efficient shrinkage estimators for the kernel mean. Empirical evaluations on several benchmark applications clearly demonstrate that the proposed estimators outperform the standard kernel mean estimator.
\end{abstract}

\section{Introduction}

This paper aims to improve the estimation of the mean function in a reproducing kernel Hilbert space (RKHS) from a finite number of samples. A kernel mean of a probability distribution $\pp{P}$ over a measurable space $\inspace$ is defined by
\begin{equation}
  \label{eq:kernel-mean}
  \mu_{\pp{P}} := \int_{\inspace}k(x,\cdot)\;\dd\pp{P}(x) \in
  \hbspace ,
\end{equation}  
\noindent where $\hbspace$ is an RKHS associated with a reproducing kernel $k:\inspace\times\inspace\rightarrow\rr$. Conditions ensuring that this expectation exists are given in \cite{Smola07Hilbert}. Unfortunately, it is not practical to compute $\mu_{\pp{P}}$ directly because the distribution $\pp{P}$ is usually unknown. Instead, given an i.i.d sample $x_1,x_2,\ldots,x_n$ from $\pp{P}$, we can easily compute the empirical kernel mean by the average
\begin{equation}
  \label{eq:empirical-mean}
  \widehat{\mu}_{\pp{P}} := \frac{1}{n}\sum_{i=1}^n k(x_i,\cdot)\,.
\end{equation}
The estimate $\widehat{\mu}_{\pp{P}}$ is the most natural and commonly used estimate of the true kernel mean. Our primary interest here is to investigate whether one can improve upon this standard estimator.


The mean function in RKHS serves as a foundation to many kernel-based algorithms. For instance, nonlinear component analyses, such as kernel PCA, kernel FDA, and kernel CCA, rely heavily on mean functions and covariance operators in RKHS \cite{Scholkopf98:NCA}. The kernel $K$-means algorithm performs clustering in feature space using mean functions as the representatives of the clusters \cite{Dhillon04:KKS}. All of those employ \eqref{eq:empirical-mean} as the estimate of the true mean function.


Recently the kernel mean itself has gained attention in the machine learning community. This is mainly due to the introduction of Hilbert space embedding for distributions \cite{Berlinet04:RKHS,Smola07Hilbert}. Representing the distribution as a mean function in the RKHS has several advantages: 1) the representation with appropriate choice of kernel $k$ has been shown to preserve all information about the distribution \cite{Fukumizu04:RKHS,Sriperumbudur08injectivehilbert,Sriperumbudur10:Metrics}; 2) basic operations on the distribution can be carried out by means of inner products in RKHS, e.g., $\ep_{\pp{P}}[f(x)]=\langle f,\mu_{\pp{P}}\rangle_{\hbspace}$ for all $f\in\hbspace$; 3) no intermediate density estimation is required, e.g., when testing for homogeneity from finite samples. As a result, there have been successful applications that benefit from the kernel mean representation, namely, a maximum mean discrepancy (MMD) \cite{Gretton07:MMD}, kernel dependency measure \cite{Gretton05:KIND}, kernel two-sample-test \cite{Gretton12:KTT}, Hilbert space embedding of HMMs \cite{Song10:HMM}, and kernel Bayes rule \cite{Fukumizu11:KBR}.


On the one hand, there are reasons to believe that the sample mean $\widehat{\mu}_{\pp{P}}$ should be optimal for estimation of the population mean $\mu_{\pp{P}}$. For example, it is the minimum-variance unbiased estimator (MVUE). Several discussions supporting this argument can be found in \cite{Lehmann98:point,Berlinet04:RKHS}. On the other hand, in 1955, Charles Stein \cite{Stein55:Inadmissible} showed that a maximum likelihood estimator (MLE), i.e., the standard empirical mean, for the mean of the multivariate Gaussian distribution $\mathcal{N}(\bm{\theta},\sigma^2\id)$ is inadmissible (see \S\ref{sec:inadmissibility} for a formal definition of admissibility). That is, there exists an estimator that always achieves smaller total mean squared error regardless of the true $\theta$, when the dimension is at least 3. Perhaps the best known estimator of such kind is James-Stein's estimator \cite{Stein61:JSE}. Interestingly, the James-Stein's estimator is itself inadmissible, and there exists a wide class of estimators that outperform the MLE, see e.g., \cite{Berger76:quadratic}. However, they all share a common feature: the mean estimation of at least one coordinate involves observations from other coordinates. Extensive research of Stein's result is available in, e.g., \cite{Lehmann98:point} and references therein.



As the kernel mean estimator \eqref{eq:empirical-mean} is similar in form to the MLE, i.e., the estimate of $\mu_{\pp{P}}$ is just the empirical average, one might suspect that it can be improved upon via Stein's phenomenon.\footnote{Though $\mu_{\pp{P}}$ may be viewed as an asymptotic mean of Gaussian measure on RKHS \cite[Theorem 108]{Berlinet04:RKHS}, whether we can regard $\widehat{\mu}_{\pp{P}}$ as a true maximum likelihood of some distribution on RKHS is still an open problem.} Although some attempts have been made to understand this phenomenon in the infinite-dimensional space \cite{Berger83:GP-Stein,Mandelbaum87:admissibility,Privault08:GP-malliavin}, this work presents a key challenge: the true probability distribution of $\mu_{\pp{P}}$ is essentially unknown and is subject to the choice of kernel $k$ and distribution $\pp{P}$, whereas all the previous works construct an estimator that outperforms a specific probability distribution. Despite this difference it is possible to establish the inadmissibility of the standard estimator \eqref{eq:empirical-mean} under certain assumptions, and construct better estimators.

The contribution of this paper can be summarised as follows: First, we show that the standard kernel mean estimator is inadmissible by providing an alternative estimator that achieves smaller expected loss (\S \ref{sec:inadmissibility}). This inadmissibility perspective sheds light on how one could construct better estimators. To this end, we propose a \emph{kernel mean shrinkage estimator} (KMSE) which relies on a fundamentally different framework from what has often been considered in the literature (\S\ref{sec:kmse}). Moreover, we propose an efficient leave-one-out cross-validation procedure to select the shrinkage parameter. Lastly, we demonstrate the benefit of the proposed estimators in several benchmark applications (\S\ref{sec:experiments}).



\section{Motivation: Standard Kernel Mean Estimator Is Inadmissible}
\label{sec:inadmissibility}

For fixed distribution $\pp{P}$, denote by $\mu$ and $\empmm$ the true kernel mean and its empirical estimate \eqref{eq:empirical-mean} from the sample $x_1,x_2,\ldots,x_n\sim\pp{P}$. We consider the loss function 
\begin{equation*}
\ell(\mu,\widehat{\mu}) = \|\mu - \widehat{\mu}\|^2_{\hbspace} . 
\end{equation*}
An estimator $\delta:\hbspace\rightarrow\hbspace$ is a mapping which is measurable w.r.t. the Borel $\sigma$-algebra of $\hbspace$. The estimator $\delta$ is evaluated by its risk function $\mathcal{R}(\mu,\delta) = \ep_{\pp{P}}[\ell(\mu,\delta(\phi(x)))]$. An estimator $\delta'$ is as good as $\delta$ if $\mathcal{R}(\mu,\delta') \leq \mathcal{R}(\mu,\delta)$, and is better than $\delta$ if it is as good as $\delta$ and $\mathcal{R}(\mu,\delta')<\mathcal{R}(\mu,\delta)$ for at least one $\pp{P}$. An estimator is \emph{inadmissible} if there exists a better estimator, and it is admissible otherwise.

Let us consider an alternative kernel mean estimator: 
\begin{equation*}
\widehat{\mu}_{\alpha} := \alpha f^* + (1-\alpha)\widehat{\mu} ,
\end{equation*}
\noindent where $0\leq \alpha < 1$ and $f^*\in\hbspace$. It is basically a shrinkage estimator that shrinks the standard estimator toward a function $f^*$ by an amount specified by $\alpha$. If $\alpha=0$, $\widehat{\mu}_{\alpha}$ reduces to the standard estimator. The following theorem asserts that the standard estimator $\widehat{\mu}$ is inadmissible w.r.t. the shrinkage estimator $\widehat{\mu}_{\alpha}$ with an appropriate choice of $\alpha$, regardless of the function $f^*$ (more below).
\begin{theorem}[Inadmissibility]
  \label{thm:inadmissibility}
  Given an i.i.d. sample $x_1,x_2,\ldots,x_n\sim\pp{P}$ and an arbitrary function $f^*\in\hbspace$, there exists $\alpha_*$ for which $\empmm_{\alpha_*}$ is better than $\empmm$.
\end{theorem}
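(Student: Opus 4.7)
The plan is to express $\mathcal{R}(\mu, \widehat{\mu}_{\alpha})$ explicitly as a quadratic polynomial in $\alpha$ and then pick a value that makes it strictly smaller than $\mathcal{R}(\mu, \widehat{\mu})$. First, I would write
\[
\mu - \widehat{\mu}_{\alpha} \;=\; (1-\alpha)(\mu - \widehat{\mu}) + \alpha(\mu - f^{*}),
\]
and expand the squared RKHS norm into three pieces: $(1-\alpha)^{2}\|\mu - \widehat{\mu}\|_{\hbspace}^{2}$, $\alpha^{2}\|\mu - f^{*}\|_{\hbspace}^{2}$, and a cross term $2\alpha(1-\alpha)\,\langle \mu - \widehat{\mu},\, \mu - f^{*}\rangle_{\hbspace}$.

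The second step is to take expectations under $\pp{P}$. The key fact is that $\widehat{\mu}$ is unbiased for $\mu$, i.e.\ $\ep_{\pp{P}}[\widehat{\mu}] = \mu$, which is immediate from linearity and $\ep_{\pp{P}}[k(X,\cdot)] = \mu$. Since $\mu - f^{*}$ is deterministic, the cross term therefore vanishes in expectation. Writing $A := \ep_{\pp{P}}\|\widehat{\mu} - \mu\|_{\hbspace}^{2} = \mathcal{R}(\mu, \widehat{\mu})$ and $B := \|\mu - f^{*}\|_{\hbspace}^{2}$, this yields
\[
\mathcal{R}(\mu, \widehat{\mu}_{\alpha}) \;=\; (1-\alpha)^{2}\,A + \alpha^{2}\,B.
\]

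Finally, I would analyze the risk difference $\mathcal{R}(\mu, \widehat{\mu}_{\alpha}) - A = \alpha^{2}(A+B) - 2\alpha A$ as a convex parabola in $\alpha$, vanishing at $\alpha = 0$ and at $\alpha = 2A/(A+B)$. It is therefore strictly negative on the open interval $\bigl(0,\, 2A/(A+B)\bigr)$, and minimized at $\alpha_{*} = A/(A+B) \in [0,1)$ with reduction $A^{2}/(A+B)$. Any such $\alpha_{*}$ proves the claim, provided $A > 0$—the only degenerate case being when $\widehat{\mu} = \mu$ almost surely. No deep machinery is required beyond elementary RKHS identities and unbiasedness, in sharp contrast to the classical Stein phenomenon: here shrinkage towards a fixed target already helps, regardless of dimension, thanks to the bias--variance tradeoff. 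The principal caveat to flag—and the real obstacle to turning the statement into a practical algorithm—is that the improving $\alpha_{*}$ depends on the unknown $\pp{P}$ through both $A$ and $B$, so the theorem is only an existence statement; a data-driven choice of $\alpha$ (e.g.\ by leave-one-out cross-validation) is left to the subsequent sections.
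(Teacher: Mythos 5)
Your proposal is correct and follows essentially the same route as the paper: the risk of $\widehat{\mu}_{\alpha}$ is written as a quadratic in $\alpha$, namely $(1-\alpha)^{2}\Delta + \alpha^{2}\|\mu-f^{*}\|_{\hbspace}^{2}$, which is exactly the paper's expression after completing the square, and you arrive at the identical optimum $\alpha_{*} = \Delta/(\Delta + \|f^{*}-\mu\|_{\hbspace}^{2})$ with the same risk reduction $\Delta^{2}/(\Delta + \|f^{*}-\mu\|_{\hbspace}^{2})$. The only cosmetic difference is that you kill the cross term at once via unbiasedness of $\widehat{\mu}$, whereas the paper reaches the same quadratic by expanding with the reproducing property and the identity $\ep\|\widehat{\mu}\|^{2} = \Delta + \ep[k(x,\tilde{x})]$; your caveats (need $\Delta>0$; $\alpha_{*}$ depends on the unknown $\pp{P}$) match the paper's own remarks.
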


\begin{proof}[Proof of Theorem \ref{thm:inadmissibility}]
  The risk of standard kernel mean estimator satisfies 
  \begin{equation*}
    \ep\|\empmm - \mu\|^2 = \frac{1}{n}\left(\ep[k(x,x)] - \ep[k(x,\tilde{x})]\right) =: \Delta
  \end{equation*}
  \noindent where $\tilde{x}$ is an independent copy of $x$. Let us define the risk of the proposed shrinkage estimator by $\Delta_\alpha := \ep\|\widehat{\mu}_{\alpha} - \mu\|^2$ where $0\leq \alpha < 1$. We can then write this in terms of the standard risk as 
  \begin{equation*}
  \Delta_{\alpha} = \Delta - 2\alpha\ep\left\langle \empmm-\mu,\empmm-\mu+\mu-f^*\right\rangle + \alpha^2\ep\|f^*\|^2 - 2\alpha^2\ep[f^*(x)] + \alpha^2\ep\|\empmm\|^2.
\end{equation*}
  It follows from the reproducing property of $\hbspace$ that
  $\ep[f^*(x)] = \langle f^*,\mu \rangle$. Moreover, using the fact that
  $\ep\|\empmm\|^2 = \ep\|\empmm - \mu + \mu\|^2 = \Delta +
  \ep[k(x,\tilde{x})]$, we can simplify the shrinkage risk by 
  \begin{equation*}
  \Delta_\alpha = \left(\Delta + \|f^* - \mu\|^2\right)\left(\alpha -
    \Delta/(\Delta + \|f^*-\mu\|^2)\right)^2 + \Delta - (\Delta^2/(\Delta + \|f^* - \mu\|^2) .
  \end{equation*}
  Then, the shrinkage parameter given by $\alpha_* := \Delta/(\Delta + \|f^* - \mu\|^2)$ leads to $\Delta_{\alpha_*} - \Delta = - \Delta^2/(\Delta + \|f^* -
\mu\|^2) \leq 0$. The equality occurs only when $\Delta = 0$.
\end{proof}

    
    
    



Theorem \ref{thm:inadmissibility} relies on important assumption that the true kernel mean of the distribution $\pp{P}$ is required to estimate $\alpha_*$. In spite of this, the theorem has an important implication suggesting that the shrinkage estimator $\widehat{\mu}_{\alpha}$ can improve upon $\widehat{\mu}$ if $\alpha$ is chosen appropriately. In the next section, we will exploit this result in order to contruct more practical estimators. Moreover, it is striking to see that the shrinkage estimator always improves upon the standard one regardless of the direction of shrinkage, as specified by the choice of $f^*$. In other words, there exists a wide class of kernel mean estimators that are better than the standard one. Inspired by  James-Stein's estimator, we will focus on $f^*=\mathbf{0}$ in the following sections.


\section{Kernel Mean Shrinkage Estimator}
\label{sec:kmse}

Since the choice of $\alpha$ is very crucial, we propose a new formulation of kernel mean estimator that will allow us to estimate $\alpha$ systematically and efficiently. Let $\phi:\inspace\rightarrow\hbspace$ be a feature map associated with the kernel $k$ and $\langle \cdot,\cdot\rangle$ be an inner product in the RKHS $\hbspace$ such that $k(x,x')=\langle\phi(x),\phi(x')\rangle$. The kernel mean $\mu_{\pp{P}}$ and its empirical estimate $\widehat{\mu}_{\pp{P}}$ can then be obtained as a minimizer of the loss functionals
\begin{equation*} 
  \mathcal{E}(g) := \ep_{x\sim\pp{P}}\left\|\phi(x) -
    g\right\|^2_{\hbspace}\; \text{ and } \; \widehat{\mathcal{E}}(g) := \frac{1}{n}\sum_{i=1}^n\left\|\phi(x_i)
    - g\right\|^2_{\hbspace},
\end{equation*}
\noindent respectively. Let $\delta(\phi(x)) :=
\arg\inf_{g\in\hbspace}\widehat{\mathcal{E}}(g) = \phi(x)$ be the
estimator associated with the loss functional
$\widehat{\mathcal{E}}(g)$. In the following, we call this standard
estimator a \textbf{kernel mean estimator (KME)}.

Note that the loss $\mathcal{E}(g)$ is different from the one we use in \S\ref{sec:inadmissibility}, i.e., $\ell(\mu,g)=\|\mu - g\|_{\hbspace}^2=\|\ep[\phi(x)]-g\|_{\hbspace}^2$. However, by Jensen's inequality, we have $\|\ep[\phi(x)]-g\|_{\hbspace}^2 \leq \ep\|\phi(x)-g\|_{\hbspace}^2 =: \mathcal{E}(g)$. Hence, both functionals have a minimum at the same $g$. In addition, the new form will give a more tractable leave-one-out cross-validation computation (\S\ref{sec:cross-validation}).

To construct the shrinkage estimator, we minimize a modified loss functional
\begin{equation}
  \label{eq:empirical-loss}
  \widehat{\mathcal{E}}_{\lambda}(g) := \widehat{\mathcal{E}}(g) + \lambda\Omega(\|g\|) = \frac{1}{n}\sum_{i=1}^n\left\|\phi(x_i)
    - g\right\|^2_{\hbspace} + \lambda\Omega(\|g\|),
\end{equation}
\noindent where $\Omega(\cdot)$ denotes a monotonically-increasing shrinkage functional and $\lambda$ is a non-negative shrinkage parameter. In what follows, we refer to the shrinkage estimator $\delta_{\lambda}(\cdot)$  associated with $\widehat{\mathcal{E}}_{\lambda}(\bvec)$ as a \textbf{kernel mean shrinkage estimator (KMSE)}. It follows from the representer theorem that $g$ lies in a subspace spanned by the data, i.e., $g=\sum_{j=1}^n\beta_j\phi(x_j)$ for some $\bvec\in\rr^n$. Firstly, by considering $\Omega(\|g\|)=\|g\|^2$, we can rewrite \eqref{eq:empirical-loss} as
\begin{eqnarray}
  \label{eq:shrinkage-loss}
  \widehat{\mathcal{E}}_{\lambda}(g) &=& \frac{1}{n}\sum_{i=1}^n\left\|\phi(x_i)
    - \sum_{j=1}^n\beta_j\phi(x_j) \right\|^2_{\hbspace} +
  \lambda\left\|\sum_{j=1}^n\beta_j\phi(x_j)\right\|^2_{\hbspace}
  \nonumber \\
  &=& \bvec^\top \kmat\bvec -
  2\bvec^\top \kmat\mathbf{1}_n + \lambda\bvec^\top\kmat\bvec + c,
\end{eqnarray} 
\noindent where $c$ is a constant term, $\kmat$ is an $n\times n$ gram matrix such that $\kmat_{ij} = k(x_i,x_j)$, and $\mathbf{1}_n = [1/n,1/n,\ldots,1/n]^\top$. Taking a derivative of \eqref{eq:shrinkage-loss} w.r.t. $\bvec$ and setting it to zero yield 
\begin{equation*}
  \bvec = (1/(1+\lambda))\mathbf{1}_n. 
\end{equation*}
By setting $\alpha = \lambda/(1+\lambda)$ the shrinkage estimate can be written as $\widehat{\mu}_{\lambda} := \delta_{\lambda}(\widehat{\mu}) = (1-\alpha)\widehat{\mu}$. Since $1-\alpha \leq 1$, the estimator $\delta_{\lambda}(\cdot)$ corresponds to a shrinkage estimator discussed in \S\ref{sec:inadmissibility} when $f^*=\mathbf{0}$. We will call this estimator a \textbf{simple kernel mean shrinkage estimator (S-KMSE)}.

Another interesting choice of shrinkage functional is $\Omega(\|g\|) = \bvec^\top\bvec$. This leads to a particularly interesting kernel mean estimator. In this case, the optimal weight vector is given by 
\begin{equation*}
  \bvec = (\kmat+\lambda\id)^{-1}\kmat\mathbf{1}_n
\end{equation*}
and the shrinkage estimate can be written accordingly as 
\begin{equation*}
  \widehat{\mu}_{\lambda} := \delta_{\lambda}(\widehat{\mu}) = \sum_{j=1}^n\beta_j\phi(x_j) = \Phi^\top(\kmat+\lambda\id)^{-1}\kmat\mathbf{1}_n
\end{equation*}
\noindent where $\Phi = [\phi(x_1),\phi(x_2),\ldots,\phi(x_n)]^\top$. Unlike the S-KMSE, this estimator shrinks the usual estimate differently in each coordinate (cf. Theorem \ref{thm:akmse-shrinkage}). Hence, we will call it a \textbf{flexible kernel mean shrinkage estimator (F-KMSE)}.

Notice that our formulation differs fundamentally from the standard regularization framework. That is, the shrinkage functional $\Omega(\cdot)$ is introduced to shrink the solution $g$ toward certain point, i.e., $f^*$, rather than to regularize it. As the original problem is well-posed and its solution can be computed analytically, regularization is not necessary in this context.


As we can see, both S-KMSE and F-KMSE shrink the kernel mean estimate towards zero, but the F-KMSE does so in a more flexible way, as shown in the following theorem.

\begin{theorem}
  \label{thm:akmse-shrinkage}
  For F-KMSE, we can write $\widehat{\mu}_{\lambda} = \sum_{i=1}^n
  \frac{\gamma_i}{\gamma_i+\lambda}\langle
  \widehat{\mu},\mathbf{v}_i\rangle\mathbf{v}_i$ where
  $\{\gamma_i,\mathbf{v}_i\}$ are eigenvalue and eigenvector pairs of
  the covariance operator $\widehat{\mathbf{C}}_{xx}$ in $\hbspace$.
\end{theorem}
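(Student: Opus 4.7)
The plan is to reduce the closed-form F-KMSE expression to a coordinate-free operator formula in which the spectral decomposition can simply be read off. First I would use $\widehat{\mu} = \Phi^\top\mathbf{1}_n$ and $\kmat = \Phi\Phi^\top$ to rewrite the F-KMSE as
\[
  \widehat{\mu}_{\lambda} \;=\; \Phi^\top(\kmat+\lambda\id)^{-1}\kmat\mathbf{1}_n \;=\; \Phi^\top(\Phi\Phi^\top+\lambda\id)^{-1}\Phi\,\widehat{\mu}.
\]

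Next I would apply the operator push-through identity
\[
  \Phi^\top(\Phi\Phi^\top+\lambda\id)^{-1} \;=\; (\Phi^\top\Phi+\lambda I)^{-1}\Phi^\top,
\]
which follows immediately from the algebraic equality $\Phi^\top(\Phi\Phi^\top+\lambda\id) = (\Phi^\top\Phi+\lambda I)\Phi^\top$ together with invertibility of both sides for $\lambda>0$. Identifying $\Phi^\top\Phi = \sum_{i=1}^n \phi(x_i)\otimes\phi(x_i)$ with the empirical covariance operator $\widehat{\mathbf{C}}_{xx}$ on $\hbspace$ (absorbing any conventional $1/n$ factor into the eigenvalues) yields
\[
  \widehat{\mu}_{\lambda} \;=\; (\widehat{\mathbf{C}}_{xx}+\lambda I)^{-1}\widehat{\mathbf{C}}_{xx}\,\widehat{\mu} \;=\; \widehat{\mathbf{C}}_{xx}(\widehat{\mathbf{C}}_{xx}+\lambda I)^{-1}\widehat{\mu},
\]
the two factors commuting since they are continuous functions of the same self-adjoint operator.

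Finally, since $\widehat{\mathbf{C}}_{xx}$ is self-adjoint, positive, and finite-rank, I would invoke its spectral decomposition $\widehat{\mathbf{C}}_{xx} = \sum_i \gamma_i\,\mathbf{v}_i\otimes\mathbf{v}_i$ with orthonormal eigenvectors $\{\mathbf{v}_i\}$. In this basis $\widehat{\mathbf{C}}_{xx}(\widehat{\mathbf{C}}_{xx}+\lambda I)^{-1}$ is diagonal with eigenvalues $\gamma_i/(\gamma_i+\lambda)$, and evaluating it on $\widehat{\mu}$ yields the claimed expansion.

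The main obstacle is mostly bookkeeping rather than substance: checking the push-through identity in the operator setting (benign here, since the action is supported on the finite-dimensional subspace $\mathrm{span}\{\phi(x_i)\}_{i=1}^n$) and remaining consistent about the scaling convention relating the eigenvalues of $\kmat$ to those of $\widehat{\mathbf{C}}_{xx}$. Modulo these conventions, the argument is just the familiar ridge-regression matrix algebra transferred to operators on $\hbspace$.
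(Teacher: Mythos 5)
Your argument is correct, and it reaches the result by a different route than the paper. The paper's proof works entirely in coordinates: it eigendecomposes the Gram matrix $\kmat=\mathbf{UDU}^{\top}$, expands the weight vector $\bvec=(\kmat+\lambda\id)^{-1}\kmat\mathbf{1}_n$ in that basis, and then invokes the correspondence $\mathbf{v}_i=\gamma_i^{-1/2}\sum_j u_{ij}\phi(x_j)$ between eigenvectors of $\kmat$ and of $\widehat{\mathbf{C}}_{xx}$ to resum the resulting double sums. Your key step is instead the push-through identity $\Phi^\top(\Phi\Phi^\top+\lambda\id)^{-1}=(\Phi^\top\Phi+\lambda I)^{-1}\Phi^\top$, which collapses the whole computation to $\widehat{\mu}_{\lambda}=\widehat{\mathbf{C}}_{xx}(\widehat{\mathbf{C}}_{xx}+\lambda I)^{-1}\widehat{\mu}$, after which the spectral theorem for the finite-rank self-adjoint operator does the rest. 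What you gain is a coordinate-free, shorter derivation that makes the structural content transparent (F-KMSE is spectral filtering of $\widehat{\mu}$ by $\gamma/(\gamma+\lambda)$) and that handles zero eigenvalues gracefully, since directions outside the range of $\widehat{\mathbf{C}}_{xx}$ are annihilated by the filter, whereas the paper's formula $\mathbf{v}_i=\gamma_i^{-1/2}\sum_j u_{ij}\phi(x_j)$ tacitly assumes $\gamma_i>0$. What the paper's route buys is an explicit expression for the weights $\bvec$ in the eigenbasis of $\kmat$ (equation \eqref{eq:wf-kmse}), which is what is actually computed and reused in the cross-validation and complexity discussion. One point to state precisely rather than ``absorb'': with the usual normalization $\widehat{\mathbf{C}}_{xx}=\frac{1}{n}\sum_i\phi(x_i)\otimes\phi(x_i)=\frac{1}{n}\Phi^\top\Phi$, your identity gives shrinkage factors $\gamma_i/(\gamma_i+\lambda/n)$, i.e.\ the $\gamma_i$ in the theorem must be the eigenvalues of $\Phi^\top\Phi$ (equivalently of $\kmat$), or else $\lambda$ must be rescaled by $n$; the paper silently uses the same convention, so this is a shared bookkeeping issue rather than a gap in your argument.
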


\begin{proof}[Proof of Theorem \ref{thm:akmse-shrinkage}]
Assume that we know the eigendecomposition $\kmat=\mathbf{UDU}^{\top}$ where $\mathbf{U} = [\mathbf{u}_1,\mathbf{u}_2,\ldots,\mathbf{u}_n]$ consists of orthogonal eigenvectors of $\kmat$ such that $\mathbf{U}^{\top}\mathbf{U} = \mathbf{I}$ and $\mathbf{D}=\mathrm{diag}(\gamma_1,\gamma_2,\ldots,\gamma_n)$ consists of corresponding eigenvalues. Hence, the weights $\bvec$ of the F-KMSE is given by
\begin{equation*}
  \bvec = (\mathbf{UDU}^{\top} + \lambda\mathbf{I})^{-1}\kmat\mathbf{1}_n = (\mathbf{U}(\mathbf{D}+\lambda\mathbf{I})\mathbf{U}^{\top})^{-1}\kmat\mathbf{1}_n = \mathbf{U}(\mathbf{D}+\lambda\mathbf{I})^{-1}\mathbf{U}^{\top}\kmat\mathbf{1}_n .
\end{equation*}
Consequently, 
\begin{equation}
  \label{eq:wf-kmse}
\bvec = \sum_{i=1}^n\mathbf{u}_i\left(\frac{1}{\gamma_i+\lambda}\right)\mathbf{u}_i^{\top}\kmat\mathbf{1}_n .
\end{equation}
Note also that
\begin{equation*}
  \kmat\mathbf{1}_n = \left[\frac{1}{n}\sum_{j=1}^nk(x_j,x_1),\ldots,\frac{1}{n}\sum_{j=1}^nk(x_j,x_n)\right]^\top = \left[\langle\widehat{\mu},\phi(x_1)\rangle,\ldots,\langle\widehat{\mu},\phi(x_n)\rangle\right]^\top .
\end{equation*}
Thus, we can rewrite \eqref{eq:wf-kmse} as
\begin{eqnarray*}
  \bvec &=& \sum_{i=1}^n\mathbf{u}_i\left(\frac{1}{\gamma_i+\lambda}\right) \sum_{j=1}^n u_{ij}\langle\widehat{\mu},\phi(x_j)\rangle \\
  &=& \sum_{i=1}^n\mathbf{u}_i\left(\frac{\sqrt{\gamma_i}}{\gamma_i+\lambda}\right) \left\langle\widehat{\mu},\frac{1}{\sqrt{\gamma_i}}\sum_{j=1}^n u_{ij}\phi(x_j)\right\rangle
\end{eqnarray*}
It follows from the correspondance between the eigenvectors of kernel
matrix $\kmat$ and covariance matrix $\widehat{\mathbf{C}}_{xx}$ that
$\mathbf{v}_i = (1/\sqrt{\gamma_i})\sum_ju_{ij}\phi(x_j)$ where
$\mathbf{v}_i$ is the $i$th eigenvector of the covariance matrix. Consequently, we have 
\begin{equation}
  \label{eq:meanmap-proj}
  \left\langle\widehat{\mu},\frac{1}{\sqrt{\gamma_i}}\sum_{j=1}^n u_{ij}\phi(x_j)\right\rangle = \left\langle \widehat{\mu},\mathbf{v}_i \right\rangle
\end{equation}
In words, \eqref{eq:meanmap-proj} is a projection of the standard
kernel mean embedding onto the eigenvector $\mathbf{v}_i$. Using this
representation, the shrinkage estimate of the F-KMSE given by the weights $\bvec$ becomes
\begin{equation*}
  \widehat{\mu}_{\lambda} = \sum_{j=1}^n\left[ \sum_{i=1}^n\mathbf{u}_i\left(\frac{\sqrt{\gamma_i}}{\gamma_i+\lambda}\right) \left\langle \widehat{\mu},\mathbf{v}_i \right\rangle\right]_j\phi(x_j) .
\end{equation*} 

Applying the same trick, we can write the F-KMSE estimate entirely in
term of eigenvectors of the covariance matrix $\widehat{\mathbf{C}}_{xx}$ as
\begin{eqnarray*}
  \widehat{\mu}_{\lambda} &=& \sum_{j=1}^n\phi(x_j)\sum_{i=1}^n u_{ij}\left(\frac{\sqrt{\gamma_i}}{\gamma_i+\lambda}\right) \left\langle \widehat{\mu},\mathbf{v}_i \right\rangle \\
  &=& \sum_{i=1}^n \left(\frac{\sqrt{\gamma_i}}{\gamma_i+\lambda}\right) \left\langle \widehat{\mu},\mathbf{v}_i \right\rangle\sum_{j=1}^n u_{ij}\phi(x_j) \\
  &=& \sum_{i=1}^n \left(\frac{\gamma_i}{\gamma_i+\lambda}\right) \left\langle \widehat{\mu},\mathbf{v}_i \right\rangle\mathbf{v}_i
\end{eqnarray*}
Since $\lambda > 0$, we have that $\gamma_i/(\gamma_i + \lambda) <
1$. This completes the proof.
\end{proof}

In words, the effect of F-KMSE shrinkage is related to the sample variance in feature space, which can be viewed as the amount of information available in each coordinate. To be more precise, the F-KMSE restricts the shrinkage in directions with high variance and allows more shrinkage in low-variance directions.


Moreover, the squared RKHS norm $\|\cdot\|^2_{\hbspace}$ can be decomposed as a sum of squared loss weighted by the eigenvalues $\gamma_i$ (cf. \cite[Appendix]{Mandelbaum87:admissibility}). By the same reasoning as Stein's result in finite-dimensional case, one would suspect that an improvement of shrinkage estimators in $\hbspace$ should also depend on how fast the eigenvalues of $k$ decay. That is, one would expect greater improvement if the values of $\gamma_i$ decay very slowly. For example, the Gaussian RBF kernel with larger bandwidth gives smaller improvement when compared to one with smaller bandwidth. Similarly, we should expect to see more improvement when applying a Laplacian kernel than when using a Gaussian RBF kernel.

The weight vector $\bvec$ output by our estimators is in general not normalized. In fact, all elements will be smaller than $1/n$ as a result of shrinkage. However, one may impose a constraint that $\bvec$ must sum to one and resort to a quadratic programming \cite{Song08:TDE}. Unfortunately, this approach has undesirable effect of sparsity which is unlikely to improve upon the standard estimator. Post-normalizing the weights often deteriorates the estimation performance.

Recently, attempts have been made to improve the kernel mean estimation in various contexts. In \cite{Kim12:RKDE}, the loss functional $\widehat{\mathcal{E}}(g)$ is replaced by a robust loss function such as the Huber's loss to reduce the effect of outliers. Regularized version of MMD was adopted by \cite{Danafar13:RMMD} in the context of kernel-based hypothesis testing. The resulted formulation resembles our S-KMSE. Furthermore, the F-KMSE is of a similar form as the conditional mean embedding used in \cite{Grunewalder12:LGBPP}, which can be viewed more generally as a regression problem in RKHS with smooth operators \cite{Grunewalder13:SO}. Despite this similarity, one should note that in this work we treat the problem entirely as estimation problem, and hence it is fundamentally different from the existing works.


\subsection{Cross-validation}
\label{sec:cross-validation}

As discussed in \S\ref{sec:inadmissibility}, the amount of shrinkage plays an important role in our estimators. In this work we propose to select the shrinkage parameter $\lambda$ by an automatic leave-one-out cross-validation.

For a given shrinkage parameter $\lambda$, let us consider the observation $x_i$ as being a new observation by omitting it from the dataset. Denote by $\widehat{\mu}^{(-i)}_{\lambda} = \sum_{j\neq i}\beta_j^{(-i)}\phi(x_j)$ the kernel mean estimated from the remaining data, using the value $\lambda$ as a shrinkage parameter, so that $\bvec^{(-i)}$ is the minimizer of $\widehat{\mathcal{E}}^{(-i)}_{\lambda}(g)$. We will measure the quality of $\widehat{\mu}^{(-i)}_{\lambda}$ by how well it approximates $\phi(x_i)$. The overall quality of the estimate is quantified by the cross-validation score
\begin{equation}
  \label{eq:loocv-score}
  LOOCV(\lambda) = \frac{1}{n}\sum_{i=1}^n\left\| \phi(x_i) - \widehat{\mu}_{\lambda}^{(-i)}\right\|^2_{\hbspace}.
\end{equation}
By simple algebra, it is not difficult to show that the optimal shrinkage parameter of S-KMSE can be calculated analytically, as stated by the following theorem.


\begin{theorem}
  \label{thm:skmse-loocv}
  Let $\rho:=\frac{1}{n^2}\sum_{i,j=1}^n k(x_i,x_j)$ and $\varrho:=\frac{1}{n}\sum_{i=1}^n k(x_i,x_i)$. The shrinkage parameter $\lambda_* = (\varrho-\rho)/((n-1)\rho + \varrho/n-\varrho)$ of the S-KMSE is the minimizer of $LOOCV(\lambda)$.
\end{theorem}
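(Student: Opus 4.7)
The plan is to use the closed form of the S-KMSE established in Section \ref{sec:kmse}, namely $\widehat{\mu}_\lambda = (1-\alpha)\widehat{\mu}$ with $\alpha = \lambda/(1+\lambda)$, to obtain an explicit expression for the leave-one-out estimator $\widehat{\mu}^{(-i)}_\lambda$ and then reduce $LOOCV(\lambda)$ to a quadratic in the single variable $t := 1/(1+\lambda)$ whose coefficients depend only on $\rho$ and $\varrho$. Minimization in $t$ is then elementary, and the stated value of $\lambda_*$ is obtained by inverting $t \mapsto (1-t)/t$.

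First I would write
\[
\widehat{\mu}^{(-i)}_\lambda \;=\; \frac{1}{1+\lambda}\cdot\frac{1}{n-1}\sum_{j\neq i}\phi(x_j) \;=\; t\cdot\frac{1}{n-1}\sum_{j\neq i}\phi(x_j),
\]
since the S-KMSE on $n-1$ points is just $t$ times the usual empirical mean of those points. Expanding $\|\phi(x_i) - \widehat{\mu}^{(-i)}_\lambda\|^2_\hbspace$ via the reproducing property yields three terms: $k(x_i,x_i)$, a cross term linear in $t$, and a squared-norm term quadratic in $t$.

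Next I would average over $i$ and rewrite each of the three contributions in terms of $\rho$ and $\varrho$. Using $\sum_{i,j}k(x_i,x_j)=n^2\rho$ and $\sum_i k(x_i,x_i)=n\varrho$, the cross-term sum becomes $\sum_i\sum_{j\neq i}k(x_i,x_j) = n^2\rho - n\varrho$, while the quadratic-term sum requires the identity
\[
\sum_{j,l \neq i} k(x_j,x_l) \;=\; n^2\rho - 2\sum_l k(x_i,x_l) + k(x_i,x_i),
\]
which, after summing over $i$, gives $(n-2)n^2\rho + n\varrho$. Substituting, $LOOCV(\lambda)$ takes the form
\[
LOOCV = \varrho \;-\; \frac{2(n\rho-\varrho)}{n-1}\,t \;+\; \frac{(n-2)n\rho+\varrho}{(n-1)^2}\,t^2,
\]
a strictly convex quadratic in $t$ (the leading coefficient is positive whenever the data are not all identical). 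Its unique minimizer is
\[
t_* \;=\; \frac{(n-1)(n\rho-\varrho)}{(n-2)n\rho+\varrho}.
\]

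Finally I would invert the relation $t = 1/(1+\lambda)$ to obtain $\lambda_* = (1-t_*)/t_*$, and simplify the numerator and denominator; a short algebraic manipulation shows the result coincides with $(\varrho-\rho)/((n-1)\rho+\varrho/n-\varrho)$ after multiplying numerator and denominator by $1/n$, using the identity $(n-1)\rho+\varrho/n-\varrho = (n-1)(n\rho-\varrho)/n$. The main bookkeeping obstacle is the double-sum identity for $\sum_{j,l\neq i}k(x_j,x_l)$; once that is handled correctly, the rest is a routine quadratic minimization and algebraic rearrangement.
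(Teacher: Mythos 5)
Your derivation is correct and follows essentially the same route as the paper: write the leave-one-out S-KMSE explicitly as a scaled empirical mean of the remaining points, reduce $LOOCV$ to a quadratic in the shrinkage factor (your $t=1/(1+\lambda)$ versus the paper's $\alpha=\lambda/(1+\lambda)=1-t$), minimize, and convert back to $\lambda$; your coefficients and the final simplification to $\lambda_*=(\varrho-\rho)/((n-1)\rho+\varrho/n-\varrho)$ all check out. The only cosmetic difference is that you evaluate the double sums directly while the paper expresses them through $\|\widehat{\mu}\|^2=\rho$, which is an equivalent bookkeeping choice.
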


\begin{proof}[Proof of Theorem \ref{thm:skmse-loocv}]
  Note that the leave-one-out cross-validation score for the S-KMSE is
\begin{equation*}
  LOOCV(\alpha) := \frac{1}{n}\sum_{i=1}^n\left\|(1-\alpha)\widehat{\mu}_\lambda^{(-i)} - \phi(x_i)\right\|^2_{\hbspace},
\end{equation*}
\noindent which can be simplified further as
\begin{eqnarray*}
  LOOCV(\alpha) &=& \frac{1}{n}\sum_{i=1}^n\left\|\frac{n}{n-1}(1-\alpha)\widehat{\mu} - \frac{1-\alpha}{n-1}\phi(x_i) - \phi(x_i)\right\|^2_{\hbspace} \\
  &=& \left\|\frac{n}{n-1}(1-\alpha)\widehat{\mu}\right\|^2_{\hbspace}
  -
  \frac{2}{n}\left\langle\sum_{i=1}^n\frac{n-\alpha}{n-1}\phi(x_i),\frac{n}{n-1}(1-\alpha)\widehat{\mu}\right\rangle
  \\
  && + \frac{1}{n}\sum_{i=1}^n\left\|\frac{n-\alpha}{n-1}\phi(x_i)\right\|^2_{\hbspace} \\
  &=& \frac{n^2(1-\alpha)^2}{(n-1)^2}\|\widehat{\mu}\|^2 - \left(\frac{2}{n}\right)\left(\frac{(n-\alpha)n}{n-1}\right)\left(\frac{n(1-\alpha)}{n-1}\right)\|\widehat{\mu}\|^2 \\
  && + \frac{1}{n}\left(\frac{n-\alpha}{n-1}\right)^2\sum_{i=1}^n k(x_i,x_i) \\
  &=& \left(\frac{n^2(1-\alpha)^2}{(n-1)^2} - \frac{2n(n-\alpha)(1-\alpha)}{(n-1)^2}\right)\|\widehat{\mu}\|^2 \\
  && + \frac{(n-\alpha)^2}{n(n-1)^2}\sum_{i=1}^n k(x_i,x_i)
\end{eqnarray*}

Let $\rho:=\frac{1}{n^2}\sum_{i,j=1}^nk(x_i,x_j)$ and $\varrho:=\frac{1}{n}\sum_{i=1}^nk(x_i,x_i)$. Then, the leave-one-out score becomes
\begin{eqnarray*}
  LOOCV(\alpha) &=& \frac{1}{(n-1)^2}\left\{(-n^2 + \alpha^2n^2 + 2\alpha n - 2\alpha^2 n)\rho + (n^2-2\alpha n + \alpha^2)\varrho\right\} 
\end{eqnarray*} 
Taking the derivative of $LOOCV(\alpha)$ with respect to $\alpha$ and setting it to zero yield
\begin{equation*}
  \alpha_* = \frac{\varrho - \rho}{(n-2)\rho + \varrho/n},
\end{equation*}
Since the parameter $\alpha$ is given by $\alpha =
\lambda/(1+\lambda)$, it follows that 
\begin{equation*}
  \lambda_* = \frac{\varrho - \rho}{(n - 1)\rho + \varrho/n - \varrho}
\end{equation*}
\noindent as required.
\end{proof}

However, finding the optimal $\lambda$ for the F-KMSE is relatively
more involved. Evaluating the score  \eqref{eq:loocv-score}
na\"{\i}vely requires one to solve for
$\widehat{\mu}_{\lambda}^{(-i)}$ explicitly for every
$i$. Fortunately, we can simplify the score such that it can be
evaluated efficiently, as stated in the following theorem (see the
appendix for the detailed proof).


\begin{theorem}
  \label{thm:akmse-loocv}
  The LOOCV score of F-KMSE satisfies 
  \begin{equation*}
  LOOCV(\lambda) = \frac{1}{n}\sum_{i=1}^n (\bvec^{\top}\kmat -
  \kmat_i)^\top \mathbf{C}_{\lambda}(\bvec^{\top}\kmat - \kmat_i)
  \end{equation*}
  \noindent where $\bvec$ is the weight vector calculated from the full dataset with the shrinkage parameter $\lambda$ and $\mathbf{C}_{\lambda} = (\kmat - \frac{1}{n}\kmat(\kmat+\lambda\id)^{-1}\kmat)^{-1} \kmat (\kmat - \frac{1}{n}\kmat(\kmat + \lambda \id)^{-1}\kmat)^{-1}$.
\end{theorem}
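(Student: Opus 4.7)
The plan is to express the leave-one-out estimator $\widehat{\mu}^{(-i)}_\lambda$ in closed form in terms of the full-data weight vector $\bvec$ and the matrix $H := (\kmat + \lambda\id)^{-1}$, then reduce $\|\phi(x_i) - \widehat{\mu}^{(-i)}_\lambda\|^2_{\hbspace}$ to a quadratic form in the residual vector $\kmat \bvec - \kmat_i$, from which the stated matrix $\mathbf{C}_\lambda$ can be read off.

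First I would extend the LOO coefficient vector $\bvec^{(-i)} \in \mathbb{R}^{n-1}$ to a vector $\tilde{\bvec}^{(-i)} \in \mathbb{R}^n$ by inserting a zero at position $i$, so that $\widehat{\mu}^{(-i)}_\lambda = \sum_{j} \tilde{\beta}^{(-i)}_j \phi(x_j)$ and $\|\phi(x_i) - \widehat{\mu}^{(-i)}_\lambda\|^2_{\hbspace} = (\tilde{\bvec}^{(-i)} - \mathbf{e}_i)^\top \kmat (\tilde{\bvec}^{(-i)} - \mathbf{e}_i)$, where $\mathbf{e}_i$ is the $i$-th standard basis vector. The LOO normal equations fix $[(\kmat + \lambda\id)\tilde{\bvec}^{(-i)}]_j$ for all $j \neq i$, and together with the constraint $\tilde{\beta}^{(-i)}_i = 0$ they determine $\tilde{\bvec}^{(-i)}$ uniquely. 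A block-matrix manipulation (or equivalently a Lagrange multiplier for $\mathbf{e}_i^\top \tilde{\bvec}^{(-i)} = 0$) then yields $\tilde{\bvec}^{(-i)}$ as an explicit affine combination of $\bvec$, $\mathbf{e}_i$, and the $i$-th column $\mathbf{h}_i := H\mathbf{e}_i$ of $H$, with scalar coefficients involving $\beta_i$ and $H_{ii}$.

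Second, I would substitute this formula back into $(\tilde{\bvec}^{(-i)} - \mathbf{e}_i)^\top \kmat (\tilde{\bvec}^{(-i)} - \mathbf{e}_i)$ and collect terms so as to pass from $\bvec - \mathbf{e}_i$ to $\kmat(\bvec - \mathbf{e}_i) = \kmat \bvec - \kmat_i$. The key algebraic identities are $\kmat H = H\kmat = \id - \lambda H$, which yields $\kmat \mathbf{h}_i = \mathbf{e}_i - \lambda \mathbf{h}_i$ and $\kmat H \kmat = \kmat - \lambda\id + \lambda^2 H$, together with $\bvec = \mathbf{1}_n - \lambda H \mathbf{1}_n$. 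These identities collapse every $\kmat$-weighted inner product between $\bvec$, $\mathbf{e}_i$ and $\mathbf{h}_i$ into expressions in $\kmat$ and $H$. After simplification, the $i$-dependent scalars $\beta_i$ and $H_{ii}$ should merge into the single matrix factor $\mathbf{A} := \kmat - \tfrac{1}{n}\kmat H \kmat$, producing $\mathbf{C}_\lambda = \mathbf{A}^{-1}\kmat\mathbf{A}^{-1}$; summing over $i$ and dividing by $n$ gives the claimed expression for $LOOCV(\lambda)$.

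The main obstacle will be the reorganisation in the second step. The naive expansion contains a leading contribution proportional to $(n/(n-1))^2\,\mathbf{w}_i^\top \kmat \mathbf{w}_i$ (with $\mathbf{w}_i := \bvec - \mathbf{e}_i$), together with cross- and self-terms in $\mathbf{h}_i$ whose coefficients involve $1/H_{ii}$ and $1 - n\beta_i$. Showing that these rearrange into a single $i$-independent quadratic form in $\kmat \mathbf{w}_i = \kmat\bvec - \kmat_i$ requires careful use of $\kmat H \kmat = \kmat - \lambda\id + \lambda^2 H$ to absorb the $1/H_{ii}$ factor; this is precisely the step at which the factor $\frac{1}{n}$ in $\mathbf{A}$ — which reflects the $1/n$ normalisation in $\widehat{\mu}$ and in $\mathbf{1}_n$ — emerges.
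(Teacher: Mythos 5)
There is a genuine gap, and it sits exactly at the step you flag as the ``main obstacle'': the $\beta_i$- and $H_{ii}$-dependent terms cannot merge into the $i$-independent matrix $\mathbf{C}_{\lambda}$, because the quantity you propose to compute is not the quantity the stated formula expresses. You enforce $\tilde{\beta}^{(-i)}_i=0$, i.e.\ you solve the leave-one-out problem with the representation restricted to the remaining points. The paper's proof never does this: it uses a fictitious-observation device (Lemma \ref{lem:kmse-target}), replacing the held-out target $\phi(x_i)$ by $\widehat{\mu}^{(-i)}_{\lambda}$ itself while keeping all $n$ basis functions, derives a fixed-point equation for the deleted residual $\Delta^{(-i)}_{\lambda}$, expands $\Delta^{(-i)}_{\lambda}=\sum_k\xi_k\phi(x_k)$ over the \emph{full} sample (including $\phi(x_i)$), and solves $(\kmat-\frac{1}{n}\kmat(\kmat+\lambda\id)^{-1}\kmat)\bm{\xi}=\kmat\bvec-\kmat_{\cdot i}$, which yields the stated quadratic form directly. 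Equivalently, the theorem's expression is the residual of the problem in which only the $i$-th error term is dropped but $\phi(x_i)$ remains available in the expansion; with your zero constraint you get a different, PRESS-type expression that genuinely retains the dependence on $H_{ii}$.

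A two-point example shows the mismatch is real, not just an algebraic inconvenience. Take $n=2$, $k(x_1,x_1)=k(x_2,x_2)=1$, $k(x_1,x_2)=\tfrac12$, $\lambda=1$, so $\bvec=(\kmat+\id)^{-1}\kmat\mathbf{1}_n=(0.3,0.3)^{\top}$. For $i=1$ the theorem's formula evaluates to $(\kmat\bvec-\kmat_{\cdot 1})^{\top}\mathbf{C}_{\lambda}(\kmat\bvec-\kmat_{\cdot 1})=741/1225\approx 0.605$, which is exactly the residual of the unconstrained drop-one-term problem (its minimiser puts weight $4/35$ on $\phi(x_1)$). But any estimate of the form $\widehat{\mu}^{(-1)}_{\lambda}=\beta_2\phi(x_2)$ satisfies $\|\phi(x_1)-\beta_2\phi(x_2)\|^2_{\hbspace}=1-\beta_2+\beta_2^2\geq \tfrac34$, so no zero-constrained leave-one-out solution, under any normalisation of the held-in loss, can produce $0.605$; hence the simplification you hope for in your second step is impossible. (A secondary, fixable point: with the paper's $1/n$ weighting, deleting a point also changes the coefficient of $\|g\|^2$, so the stationarity conditions involve $\frac{n-1}{n}\kmat+\lambda\id$ rather than $\kmat+\lambda\id$; your claim that the LOO normal equations ``fix $[(\kmat+\lambda\id)\tilde{\bvec}^{(-i)}]_j$ for $j\neq i$'' already presupposes a particular renormalisation.) To arrive at the theorem as stated you need the paper's route --- the modified-target lemma plus the fixed-point/linear-system argument for $\bm{\xi}$ --- not a closed form for the zero-constrained $\bvec^{(-i)}$.
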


\begin{proof}[Proof of Thorem \ref{thm:akmse-loocv}]
  For fixed $\lambda$ and $i$, let $\widehat{\mu}^{(-i)}_\lambda$ be
  the leave-one-out kernel mean estimate of F-KMSE and let $\mathbf{A}
  := (\kmat + \lambda\id)^{-1}$. Then, we can write an expression for
  the deleted residual as 
  \begin{equation*}
    \Delta^{(-i)}_{\lambda} := \widehat{\mu}^{(-i)}_{\lambda}-\phi(x_i) =
    \widehat{\mu}_{\lambda} - \phi(x_i) +
    \frac{1}{n}\sum_{j=1}^n\sum_{l=1}^n\mathbf{A}_{jl}\langle\phi(x_l),\widehat{\mu}^{(-i)}_{\lambda}-\phi(x_i)\rangle\phi(x_j). 
\end{equation*}
Since $\Delta^{(-i)}_{\lambda}$ lies in a subspace spanned by the
sample $\phi(x_1),\ldots,\phi(x_n)$, we have $\Delta^{(-i)}_{\lambda}
= \sum_{k=1}^n\xi_k\phi(x_k)$ for some
$\bm{\xi}\in\rr^n$. Substituting $\Delta^{(-i)}_{\lambda}$ back yields 
\begin{equation*}
\sum_{k=1}^n\xi_k\phi(x_k) = \widehat{\mu}_{\lambda} - \phi(x_i) +
\frac{1}{n}\sum_{j=1}^n\{\mathbf{A}\kmat\bm{\xi}\}_j\phi(x_j) . 
\end{equation*}
By taking the inner product on both sides w.r.t. the sample
$\phi(x_1),\ldots,\phi(x_n)$ and solving for $\bm{\xi}$, we have
$\bm{\xi} = (\kmat -
\frac{1}{n}\kmat\mathbf{A}\kmat)^{-1}(\bvec^\top\kmat - \kmat_{\cdot
  i})$ where $\kmat_{\cdot i}$ is the $i$th column of
$\kmat$. Consequently, the leave-one-out score of the sample $x_i$ can
be computed by 
\begin{eqnarray*}
\|\Delta^{(-i)}_{\lambda}\|^2 &=& \bm{\xi}^\top\kmat\bm{\xi} \\ 
&=& (\bvec^\top\kmat - \kmat_{\cdot i})^\top(\kmat -
\frac{1}{n}\kmat\mathbf{A}\kmat)^{-1}\kmat(\kmat -
\frac{1}{n}\kmat\mathbf{A}\kmat)^{-1}(\bvec^\top\kmat - \kmat_{\cdot
  i}) \\ 
&=& (\bvec^\top\kmat - \kmat_{\cdot
  i})^\top\mathbf{C}_{\lambda}(\bvec^\top\kmat - \kmat_{\cdot i}) . 
\end{eqnarray*}
Averaging $\|\Delta^{(-i)}_{\lambda}\|^2$ over all samples gives 
\begin{equation*}
LOOCV(\lambda)=\frac{1}{n}\sum_{i=1}^n \|\Delta^{(-i)}_{\lambda}\|^2
= \frac{1}{n}\sum_{i=1}^n(\bvec^\top\kmat - \kmat_{\cdot
  i})^\top\mathbf{C}_{\lambda}(\bvec^\top\kmat - \kmat_{\cdot i}) , 
\end{equation*}
as required.  
\end{proof}

It is interesting to see that the leave-one-out cross-validation score in Theorem \ref{thm:akmse-loocv} depends only on the non-leave-one-out solution $\bvec_{\lambda}$, which can be obtained as a by-product of the algorithm.

\subsubsection{Computational complexity}
The S-KMSE requires only $\mathcal{O}(n)$ operations to select shrinkage parameter. For the F-KMSE, there are two steps in cross-validation. First, we need to compute $(\kmat + \lambda\id)^{-1}$ repeatedly for different values of $\lambda$. Assume that we know the eigendecomposition $\kmat=\mathbf{UDU}^\top$ where $\mathbf{D}$ is diagonal with $d_{ii} \geq 0$ and $\mathbf{UU}^\top=\id$. It follows that $(\kmat + \lambda \id)^{-1}= \mathbf{U}(\mathbf{D}+\lambda \id)^{-1}\mathbf{U}^\top$. Consequently,  solving for $\bvec_\lambda$ takes $\mathcal{O}(n^2)$ operations. Since eigendecomposition requires $\mathcal{O}(n^3)$ operations, finding $\bvec_\lambda$ for many $\lambda$'s is essentially free. A low-rank approximation can also be adopted to reduce the computational cost further.

Second, we need to compute the cross-validation score \eqref{eq:loocv-score}. As shown in Theorem \ref{thm:akmse-loocv}, we can compute it using only $\bvec_\lambda$ obtained from the previous step. The calculation of $\mathbf{C}_{\lambda}$ can be simplified further via the eigendecomposition of $\kmat$ as 
\begin{equation*}
\mathbf{C}_{\lambda} = \mathbf{U}(\mathbf{D} - \frac{1}{n}\mathbf{D}(\mathbf{D}+\lambda \id)^{-1}\mathbf{D})^{-1}\mathbf{D}(\mathbf{D} - \frac{1}{n}\mathbf{D}(\mathbf{D}+\lambda \id)^{-1}\mathbf{D})^{-1}\mathbf{U}^\top .
\end{equation*} 
Since it only involves the inverse of diagonal matrices, the inversion can be evaluated in $\mathcal{O}(n)$ operations. The overall computational complexity of the cross-validation requires only $\mathcal{O}(n^2)$ operations, as opposed to the na\"{\i}ve approach that requires $\mathcal{O}(n^4)$ operations. When performed as a by-product of the algorithm, the computational cost of cross-validation procedure becomes negligible as the dataset becomes larger. In practice, we use the \texttt{fminsearch} and \texttt{fminbnd} routines of the MATLAB optimization toolbox to find the best shrinkage parameter. 


\subsection{Covariance Operators}

The covariance operator from $\hbspace_X$ to $\hbspace_Y$ can be viewed as a mean function in a product space $\hbspace_X\otimes\hbspace_Y$. Hence, we can also construct a shrinkage estimator of covariance operator in RKHS. Let $(\hbspace_X,k_X)$ and $(\hbspace_Y,k_Y)$ be the RKHS of functions on measurable space $\inspace$ and $\mathcal{Y}$, respectively, with p.d. kernel $k_X$ and $k_Y$ (with feature map $\phi$ and $\varphi$). We will consider a random vector $(X,Y):\Omega \rightarrow \inspace\times\mathcal{Y}$ with distribution $\pp{P}_{XY}$, with $\pp{P}_X$ and $\pp{P}_Y$ as marginal distributions. Under some conditions, there exists a unique cross-covariance operator $\Sigma_{YX}:\hbspace_X\rightarrow\hbspace_Y$ such that 
\begin{equation*}
  \langle g,\Sigma_{YX}f\rangle_{\hbspace_Y} = \ep_{XY}[(f(X) - \ep_X[f(X)])(g(Y) - \ep_Y[g(Y)])] = Cov(f(X),g(Y))
\end{equation*} 
\noindent holds for all $f\in\hbspace_X$ and $g\in\hbspace_Y$ \cite{Fukumizu04:RKHS}. If $X$ equals $Y$, we get the self-adjoint operator $\Sigma_{XX}$ called the covariance operator.

Given an i.i.d sample from $\pp{P}_{XY}$ written as $(x_1,y_1),(x_2,y_2),\ldots,(x_n,y_n)$, we can write the empirical cross-covariance operator as 
\begin{equation*}
\widehat{\Sigma}_{YX} := \frac{1}{n}\sum_{i=1}^n\phi(x_i)\otimes\varphi(y_i) - \widehat{\mu}_X\otimes\widehat{\mu}_Y ,
\end{equation*}
\noindent where $\widehat{\mu}_X = \frac{1}{n}\sum_{i=1}^n\phi(x_i)$ and $\widehat{\mu}_Y = \frac{1}{n}\sum_{i=1}^n\varphi(y_i)$. Let $\widetilde{\phi}$ and $\widetilde{\varphi}$ be the centered feature maps of $\phi$ and $\varphi$, respectively. Then, it can be rewritten as 
\begin{equation*}
\widehat{\Sigma}_{YX} := \frac{1}{n}\sum_{i=1}^n\widetilde{\phi}(x_i)\otimes\widetilde{\varphi}(y_i) \in\hbspace_X\otimes\hbspace_Y . 
\end{equation*}
It follows from the inner product property in product space that 
\begin{eqnarray*}
\langle \widetilde{\phi}(x)\otimes\widetilde{\varphi}(y),\widetilde{\phi}(x')\otimes\widetilde{\varphi}(y')\rangle_{\hbspace_X\otimes\hbspace_Y} &=& \langle\widetilde{\phi}(x),\widetilde{\phi}(x')\rangle_{\hbspace_X}\langle\widetilde{\varphi}(y),\widetilde{\varphi}(y')\rangle_{\hbspace_Y}\\
&=& \widetilde{k}_X(x,x')\widetilde{k}_Y(y,y') . 
\end{eqnarray*}
Then, we can obtain the shrinkage estimators for the covariance operator by plugging the kernel $k((x,y),(x',y')) = \tilde{k}_X(x,x')\tilde{k}_Y(y,y')$ in our KMSEs. We will call this estimator a \textbf{covariance-operator shrinkage estimator (COSE)}.

\section{Experiments}
\label{sec:experiments}

We focus on the comparison between our shrinkage estimators and the standard estimator of the kernel mean using both synthetic datasets and real-world datasets.

\subsection{Synthetic Data}

\begin{figure}[t!]
  \centering
  \subfigure[$\lambda = 0.01\times\gamma_0$]{
    \includegraphics[width=0.45\linewidth]{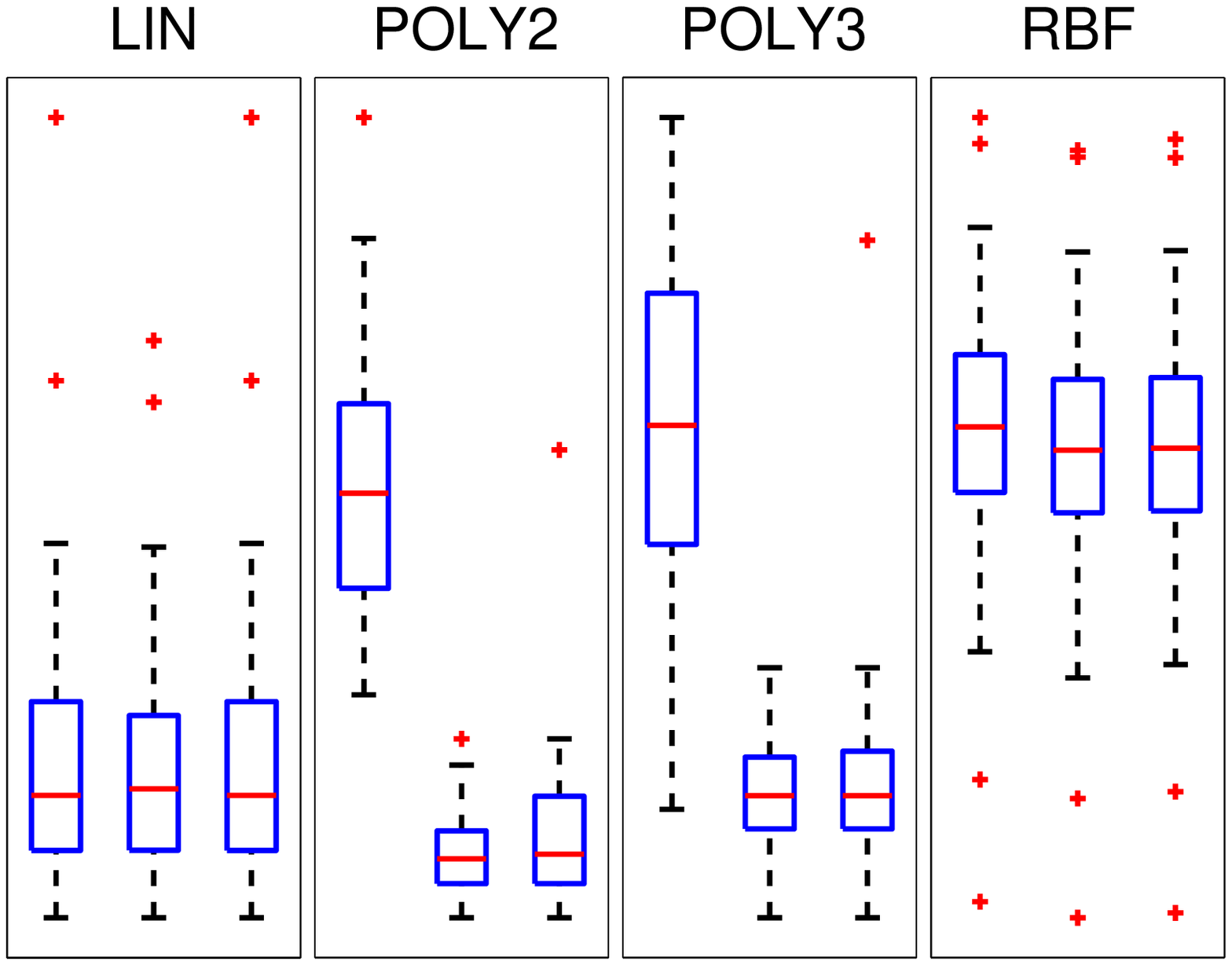}
    \label{fig:synthetic-sf1}
  } \hfill
  \subfigure[$\lambda = 0.1\times\gamma_0$]{
    \includegraphics[width=0.45\linewidth]{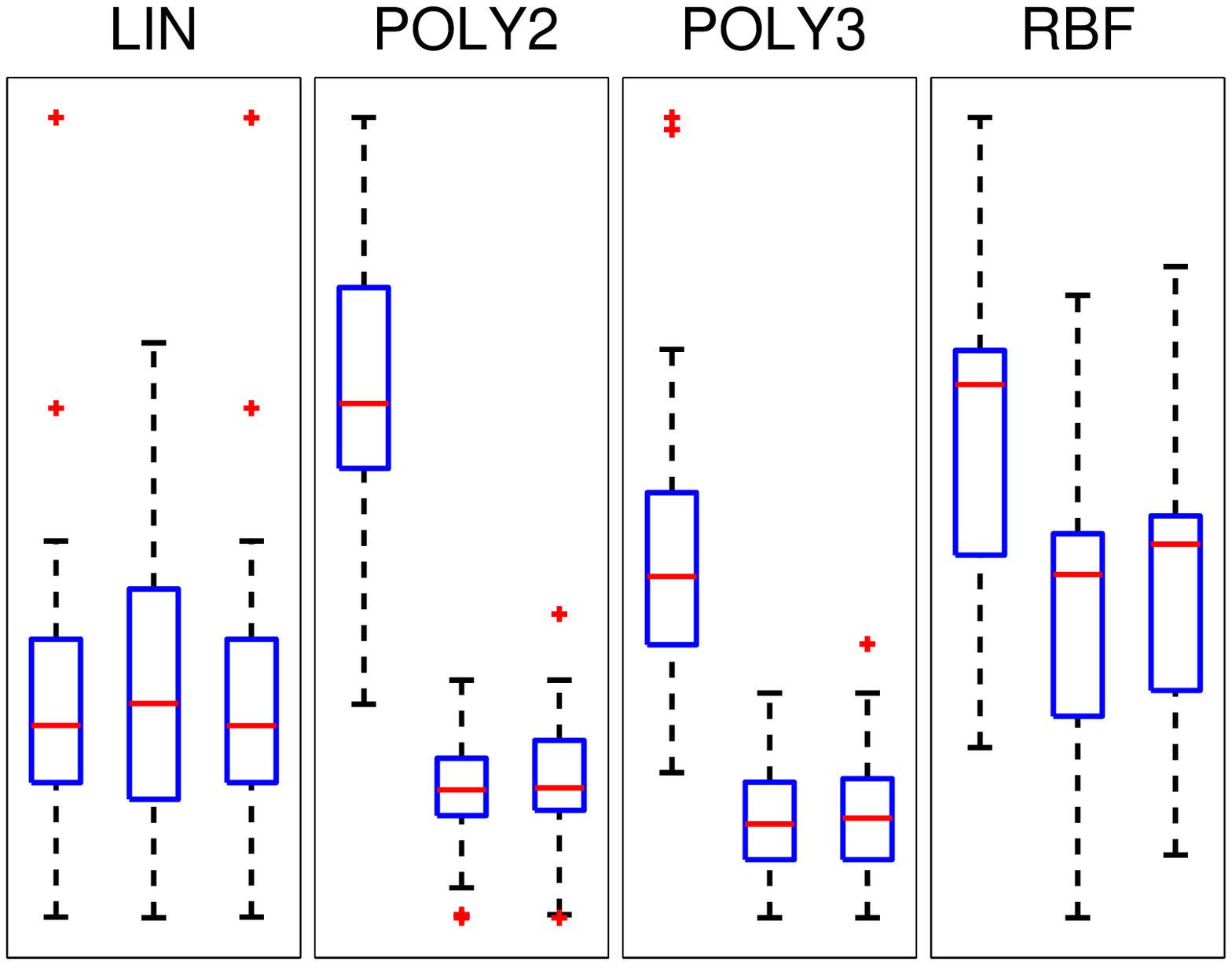}
    \label{fig:synthetic-sf2}
  }
  \subfigure[$\lambda = 1\times\gamma_0$]{
    \includegraphics[width=0.45\linewidth]{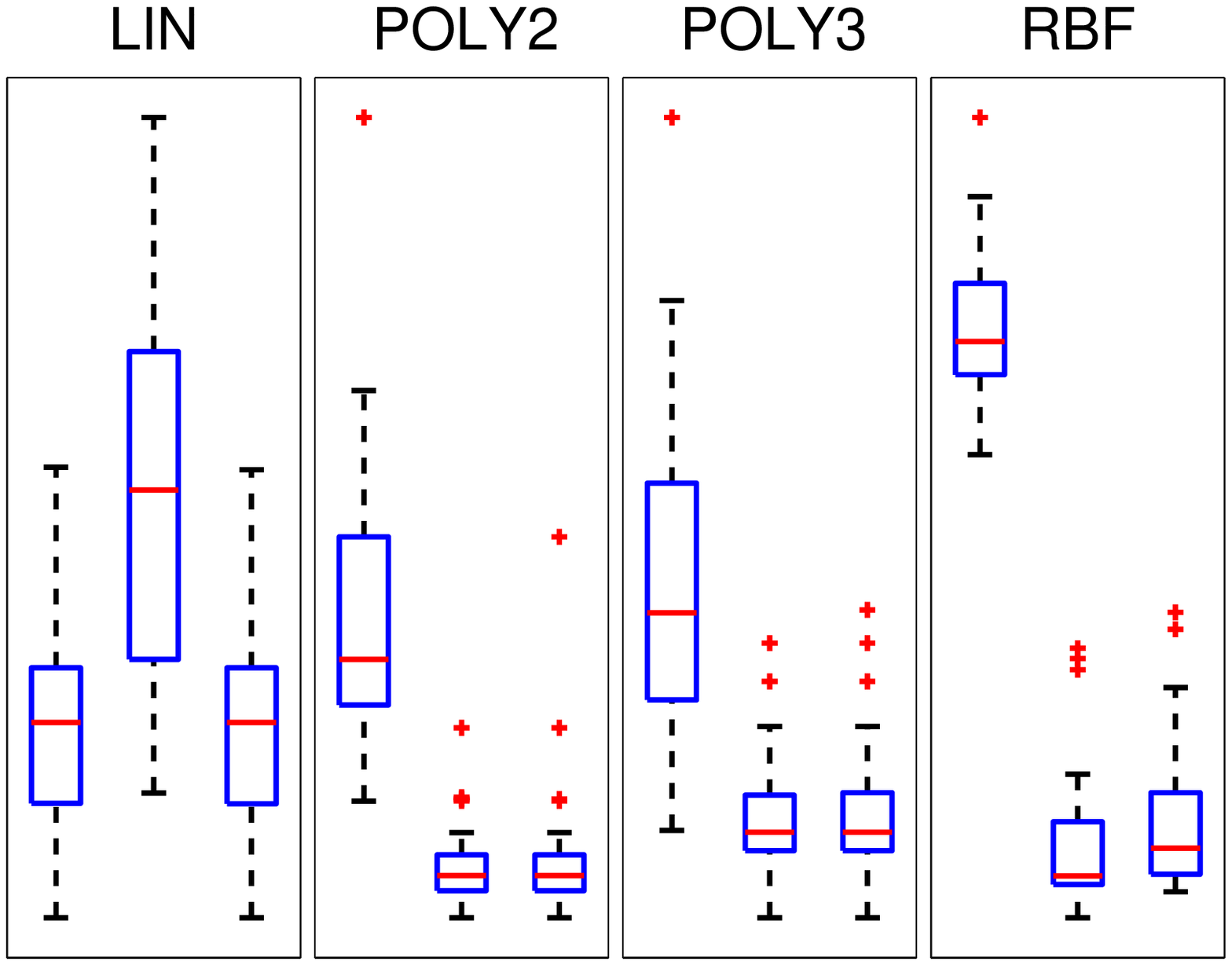}
    \label{fig:synthetic-sf3}
  } \hfill
  \subfigure[$\lambda = 10\times\gamma_0$]{
    \includegraphics[width=0.45\linewidth]{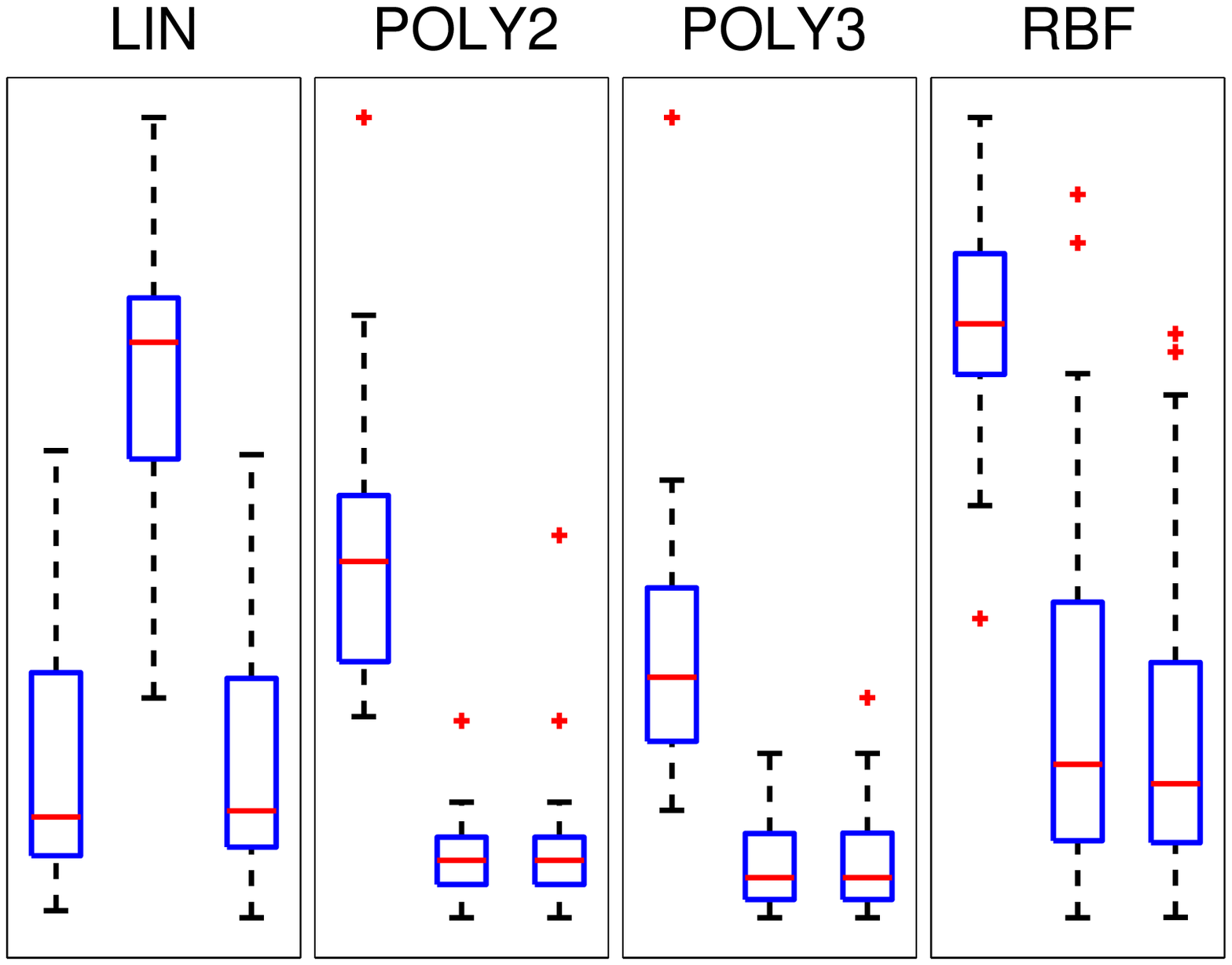}
    \label{fig:synthetic-sf4}
  }
  \caption{The average loss of KME, S-KMSE, and F-KMSE estimators with
    different values of shrinkage parameter. We repeat the experiments over 30
    different distributions with $n=10$ and $d=30$.}
  \label{fig:sim-results1}
\end{figure} 
  
\begin{figure}[t!] 
  \centering
  \includegraphics[width=\linewidth]{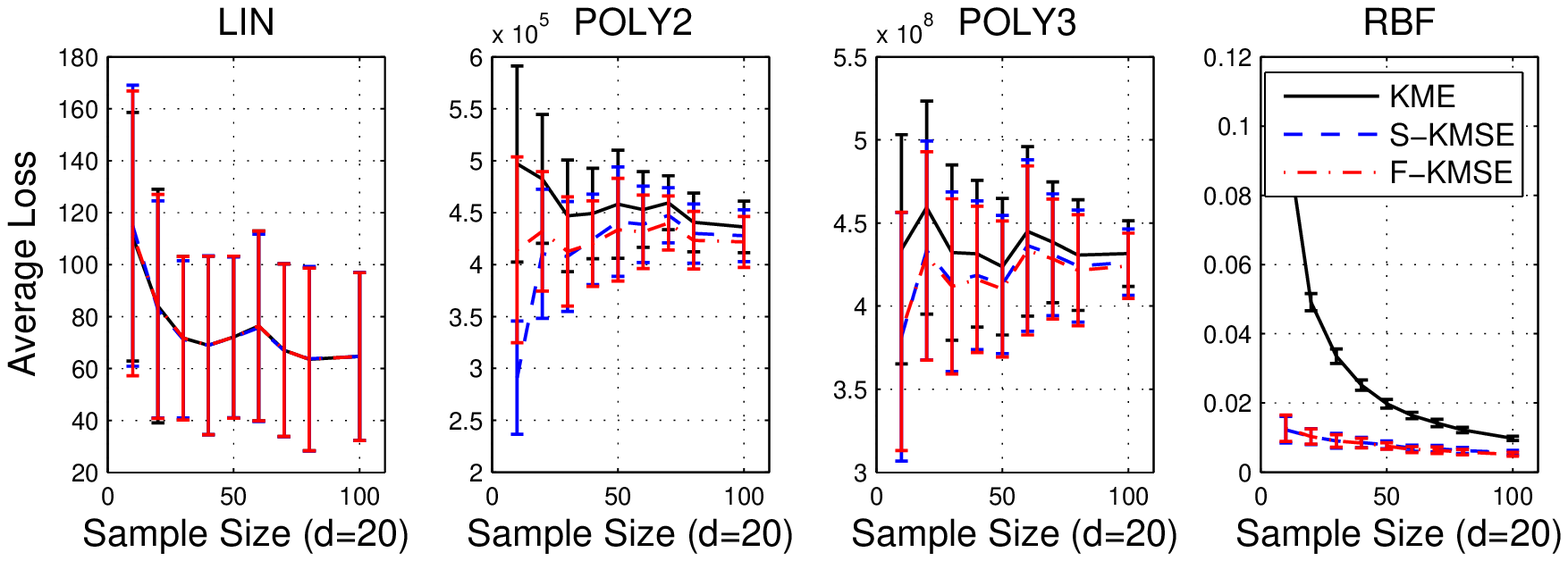}
  \includegraphics[width=\linewidth]{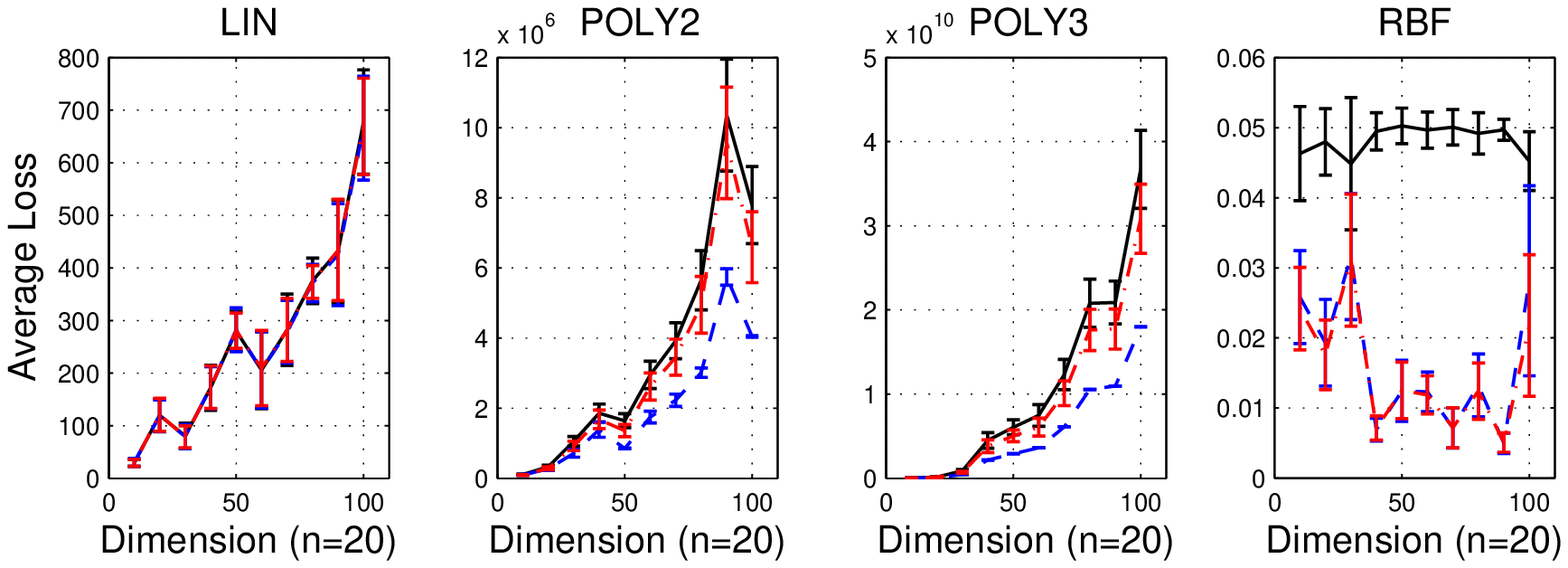}
  \caption{The average loss over 30 different distributions of KME, S-KMSE, and F-KMSE with varying sample size ($n$) and dimension ($d$). The shrinkage parameter $\lambda$ is chosen by an automatic LOOCV.}
  \label{fig:sim-results2}
\end{figure}

We evaluate different estimators using the loss function 
\begin{equation*}
\ell(\bvec) := \left\|\sum_{i=1}^n\beta_i k(x_i,\cdot) - \ep_{\pp{P}}[k(x,\cdot)]\right\|^2_{\hbspace} ,
\end{equation*}
\noindent where $\pp{P}$ is the true data-generating distribution and $\bvec$ is the weight vector associated with different estimators. To allow for an exact calculation of $\ell(\bvec)$, we consider when $\pp{P}$ is a mixture-of-Gaussians distribution and $k$ is the following kernel function: 1) linear kernel $k(x,x')=x^\top x'$; 2) polynomial degree-2 kernel $k(x,x')=(x^\top x' + 1)^2$; 3) polynomial degree-3 kernel $k(x,x')=(x^\top x' + 1)^3$; and 4) Gaussian RBF kernel $k(x,x') = \exp\left(-\|x-x'\|^2/2\sigma^2\right)$. In the following, we will refer to them as LIN, POLY2, POLY3, and RBF, respectively.

\subsubsection{Experimental protocol}
  Data are generated from a $d$-dimensional mixture of Gaussians:
  \begin{equation*}
    x \sim \sum_{i=1}^4\pi_i\mathcal{N}(\bm{\theta}_i,\Sigma_i) + \varepsilon,\;
    \theta_{ij} \sim  \mathcal{U}(-10,10), \;
    \Sigma_i \sim \mathcal{W}(2\times\mathbf{I}_d,7), \; 
    \varepsilon \sim \mathcal{N}(0,0.2\times\mathbf{I}_d),    
  \end{equation*}
  \noindent where $\mathcal{U}(a,b)$ and $\mathcal{W}(\Sigma_0,df)$ represent the uniform distribution and Wishart distribution, respectively. We set $\bm{\pi} = [0.05, 0.3, 0.4, 0.25]$. The choice of parameters here is quite arbritary; we have experimented using various parameter settings and the results are similar to those presented here. For the Gaussian RBF kernel, we set the bandwidth parameter to square-root of the median Euclidean distance between samples in the dataset (i.e., $\sigma^2 = \mathrm{median}\left\{\|x_i-x_j\|^2\right\}$ throughout).

Figure \ref{fig:sim-results1} shows the average loss of different estimators using different kernels as we increase the value of shrinkage parameter. Here we scale the shrinkage parameter by the minimum non-zero eigenvalue $\gamma_0$ of kernel matrix $\kmat$. In general, we find S-KMSE and F-KMSE tend to outperform KME. However, as $\lambda$ becomes large, there are some cases where shrinkage deteriorates the estimation performance, e.g., see LIN kernel and some outliers in Figure \ref{fig:sim-results1} when $\lambda$ is large. This suggests that it is very important to choose the parameter $\lambda$ appropriately (cf. the discussion in \S\ref{sec:inadmissibility}).

Similarly, Figure \ref{fig:sim-results2} depicts the average loss as we vary the sample size and dimension of the data. In this case, the shrinkage parameter is chosen by the proposed leave-one-out cross-validation score. As we can see, both S-KMSE and F-KMSE outperform the standard KME. The S-KMSE performs slightly better than the F-KMSE. Moreover, the improvement is more substantial in the ``large $d$, small $n$'' paradigm. In the worst cases, the S-KMSE and F-KMSE perform as well as the KME.

\subsection{Real Data}


\begin{sidewaystable}
  \centering
  \caption{Average negative log-likelihood of the model $Q$ on test
    points over 10 randomizations. The boldface represents the result
    whose difference from the baseline, i.e., KME, is statistically
    significant.} 
  \resizebox*{0.9\textwidth}{!}{
  \begin{tabular}{|rl|ccc|ccc|ccc|ccc|} 
    \hline
    \multicolumn{2}{|c|}{\multirow{2}{*}{\textbf{Dataset}}} & \multicolumn{3}{c|}{\textbf{LIN}} & \multicolumn{3}{c|}{\textbf{POLY2}} & \multicolumn{3}{c|}{\textbf{POLY3}} & \multicolumn{3}{c|}{\textbf{RBF}} \\    
    && KME & S-KMSE & F-KMSE & KME & S-KMSE & F-KMSE & KME & S-KMSE & F-KMSE & KME & S-KMSE & F-KMSE \\
    \hline
    1. & ionosphere & 33.2440 & 33.0325 & 33.1436 & 53.1266 & 53.7067 & 50.8695 & 51.6800 & 49.9149 & \textbf{47.4461} & 40.8961 &\textbf{40.5578} & \textbf{39.6804} \\
    2. & sonar & 72.6630 & 72.8770 & 72.5015 & 120.3454 & \textbf{108.8246} &\textbf{109.9980} & 102.4499 & \textbf{90.3920} & 91.1547 & 71.3048 & 70.5721 & \textbf{70.5830} \\
    3. & australian & 18.3703 & \textbf{18.3341} & 18.3719 & 18.5928 & 18.6028 & 18.4987 & 41.1563 & \textbf{34.4303} & \textbf{34.5460} & 17.5138 & 17.5637 & 17.4026 \\
    4. & specft & 56.6138 & 55.7374 & \textbf{55.8667} & 67.3901 & 65.9662 & 65.2056 & 63.9273 & 63.5571 & \textbf{62.1480} & 57.5569 & \textbf{56.1386} & \textbf{55.5808} \\
    5. & wdbc & 30.9778 & 30.9266 & \textbf{30.4400} & 93.0541 & 91.5803 & \textbf{87.5265} & 58.8235 & 54.1237 & \textbf{50.3911} & 30.8227 & \textbf{30.5968} & \textbf{30.2646} \\
    6. & wine & 15.9225 & 15.8850 & 16.0431 & 24.2841 & 24.1325 & \textbf{23.5163} & 35.2069 & 32.9465 & 32.4702 & 17.1523 & 16.9177 & \textbf{16.6312} \\
    7. & satimage$^*$ & 19.6353 & 19.8721 & 19.7943 & 149.5986 & 143.2277 & 146.0648 & 52.7973 & 57.2482 & 45.8946 & 20.3306 & 20.5020 & \textbf{20.2226} \\
    8. & segment & 22.9131 & 22.8219 & \textbf{22.0696} & 61.2712 & 59.4387 & \textbf{54.8621} & 38.7226 & 38.6226 & 38.4217 & 17.6801 & \textbf{16.4149} & \textbf{15.6814} \\ 
    9. & vehicle & 16.4145 & 16.2888 & 16.3210 & 83.1597 & \textbf{79.7248} & \textbf{79.6679} & 70.4340 & 63.4322 & 48.0177 & 15.9256 & 15.8331 & \textbf{15.6516} \\
    10. & vowel & 12.4227 & 12.4219 & 12.4264 & 32.1389 & \textbf{28.0474} & \textbf{29.3492} & 25.8728 & \textbf{24.0684} & \textbf{23.9747} & 12.3976 & 12.3823 & 12.3677 \\ 
    11. & housing & 15.5249 & 15.1618 & 15.3176 & 39.9582 & 37.1360 & 32.1028 & 50.8481 & 49.0884 & 35.1366 & 14.5576 & 14.3810 & \textbf{13.9379} \\
    12. & bodyfat & 17.6426 & \textbf{17.0419} & 17.2152 & 44.3295 & 43.7959 & \textbf{42.3331} & 27.4339 & 25.6530 & 24.7955 & 16.2725 & \textbf{15.9170} & \textbf{15.8665} \\
    13. & abalone$^*$ & 4.3348 & 4.3274 & 4.3187 & 14.9166 & 14.4041 & 11.4431 & 20.6071 & 23.2487 & 23.6291 & 4.6928 & 4.6056 & 4.6017 \\
    14. & glass & 10.4078 & 10.4451 & \textbf{10.4067} & 33.3480 & 31.6110 & 30.5075 & 45.0801 & 34.9608 & \textbf{25.5677} & 8.6167 & 8.4992 & \textbf{8.2469} \\
    \hline
  \end{tabular}}
\label{tab:kmm}
\end{sidewaystable}

We consider three benchmark applications: density estimation via kernel mean matching \cite{Song08:TDE}, kernel PCA using shrinkage mean and covariance operator \cite{Scholkopf98:NCA}, and discriminative learning on distributions \cite{Muandet12:SMM,Muandet13:OCSMM}. For the first two tasks we employ 14 datasets from the UCI repositories. We use only real-valued features, each of which is normalized to have zero mean and unit variance.

\subsubsection{Density estimation}
  We perform density estimation via kernel mean matching \cite{Song08:TDE}. That is, we fit the density $Q=\sum_{j=1}^m\pi_j\mathcal{N}(\bm{\theta}_j,\sigma_j^2\id)$ to each dataset by minimizing $\|\widehat{\mu} - \mu_{Q}\|_{\hbspace}^2$ s.t. $\sum_{j=1}^m\pi_j = 1$. The kernel mean $\widehat{\mu}$ is obtained from the samples using different estimators, whereas $\mu_{Q}$ is the kernel mean embedding of the density $Q$. Unlike experiments in \cite{Song08:TDE}, our goal is to compare different estimators of $\mu_{\pp{P}}$ where $\pp{P}$ is the true data distribution. A better estimate of $\mu_{\pp{P}}$ should lead to better density estimation, as measured by the negative log-likelihood of $Q$ on the test set. We use 30\% of the dataset as a test set. We set $m=10$ for each dataset. The model is initialized by running 50 random initializations using the k-means algorithm and returning the best. We repeat the experiments 10 times and perform the paired sign test on the results at the 5\% significance level.\footnote{The paired sign test is a nonparametric test that can be used to examine whether or not two paired samples have the same distribution. In our case, we compare S-KMSE and F-KMSE against KME.}

The average negative log-likelihood of the model $Q$, optimized via different estimators, is reported in Table \ref{tab:kmm}. Clearly, both S-KMSE and F-KMSE consistently achieve smaller negative log-likelihood when compared to KME. There are however few cases in which KME outperforms the proposed estimators, especially when the dataset is relatively large, e.g., \texttt{satimage} and \texttt{abalone}. We suspect that in those cases the standard KME already provides an accurate estimate of the kernel mean. To get a better estimate, more effort is required to optimize for the shrinkage parameter.

\subsubsection{Kernel PCA}
In this experiment, we perform the KPCA using different estimates of the mean and covariance operators. We compare the reconstruction error $\mathcal{E}_{proj}(z) = \|\phi(z) - \mathbf{P}\phi(z)\|^2$ on test samples where $\mathbf{P}$ is the projection constructed from the first 20 principal components. We use a Gaussian RBF kernel for all datasets. We compare 5 different scenarios: 1) standard KPCA; 2) shrinkage centering with S-KMSE; 3) shrinkage centering with F-KMSE; 4) KPCA with S-COSE; and 5) KPCA with F-COSE. To perform KPCA on shrinkage covariance operator, we solve the generalized eigenvalue problem $\kmat^c\mathbf{B}\kmat^c\mathbf{V} = \kmat^c\mathbf{VD}$ where $\mathbf{B}=\mathrm{diag}(\bvec)$ and $\kmat^c$ is the centered gram matrix. The weight vector $\bvec$ is obtained from our shrinkage estimators using the kernel matrix $\kmat^c\circ\kmat^c$ where $\circ$ denotes the Hadamard product. We use 30\% of the dataset as a test set.

\begin{sidewaysfigure}
\centering
\includegraphics[width=0.9\linewidth]{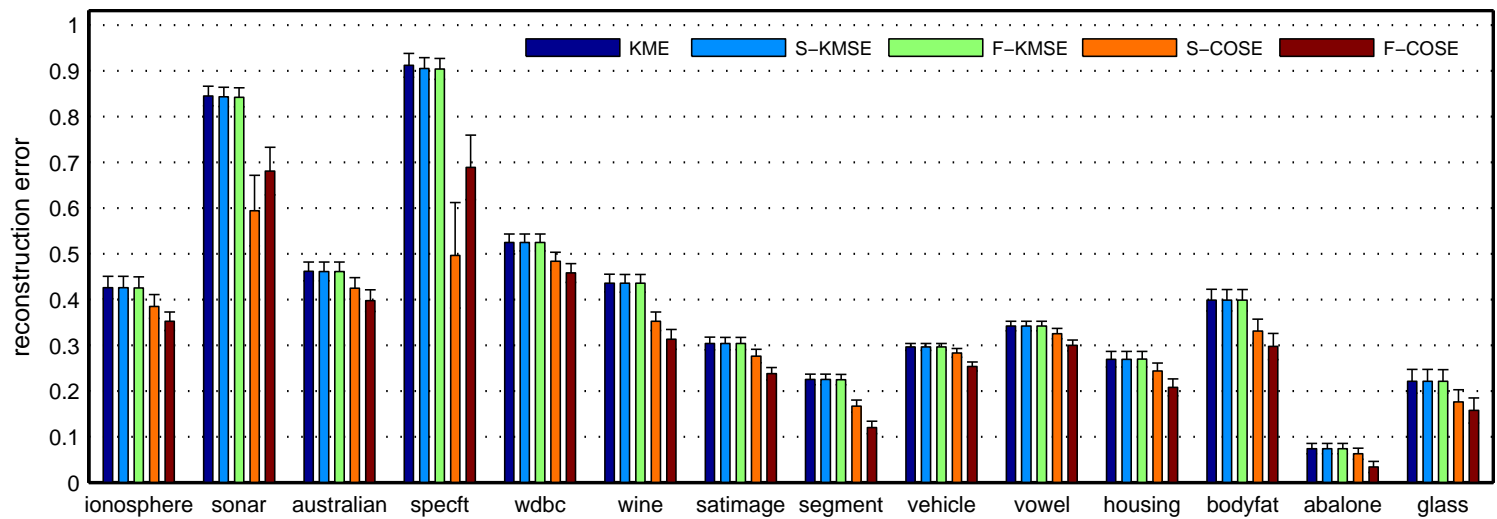}
\caption{The average reconstruction error of KPCA on hold-out test samples over 10 repetitions. The KME represents the standard approach, whereas S-KMSE and F-KMSE use shrinkage means to perform centering. The S-COSE and F-COSE directly use the shrinkage estimate of the covariance operator.}
\label{fig:real-results1}
\end{sidewaysfigure}

Figure \ref{fig:real-results1} illustrates the results of KPCA. Clearly, the S-COSE and F-COSE consistently outperforms all other estimators. Although we observe an improvement of S-KMSE and F-KMSE over KME, it is very small compared to that of S-COSE and F-COSE. This makes sense intuitively, since changing the mean point or shifting data does not change the covariance structure considerably, so it will not significantly effect the reconstruction error.

\subsubsection{Discriminative learning on distributions}
  A positive semi-definite kernel between distributions can be defined via their kernel mean embeddings. That is, given a training sample $(\widehat{\pp{P}}_1,y_1),\ldots,(\widehat{\pp{P}}_m,y_m)\in\mathscr{P}\times\{-1,+1\}$ where $\widehat{\pp{P}}_i := \frac{1}{n}\sum_{k=1}^{n}\delta_{x^{i}_{k}}$ and $x^{i}_{k}\sim\pp{P}_i$, the linear kernel between two distributions is approximated by $\langle\widehat{\mu}_{\pp{P}_i},\widehat{\mu}_{\pp{P}_j}\rangle = \langle \sum_{k=1}^n\beta^i_k \phi(x^i_k),\sum_{l=1}^n\beta^j_l \phi(x^j_l)\rangle = \sum_{k,l=1}^n\beta^i_k\beta^j_l k(x^{i}_{k},x^{j}_{l})$. The weight vectors $\bvec^i$ and $\bvec^j$ come from the kernel mean estimates of $\mu_{\pp{P}_i}$ and $\mu_{\pp{P}_j}$, respectively. The non-linear kernel can then be defined accordingly, e.g., $\kappa(\pp{P}_i,\pp{P}_j) = \exp(\|\widehat{\mu}_{\pp{P}_i} - \widehat{\mu}_{\pp{P}_j}\|^2_{\hbspace}/2\sigma^2)$. Our goal in this experiment is to investigate if the shrinkage estimate of the kernel mean improves the performance of the discriminative learning on distributions. To this end, we conduct experiments on natural scene categorization using support measure machine (SMM) \cite{Muandet12:SMM} and group anomaly detection on a high-energy physics dataset using one-class SMM (OCSMM) \cite{Muandet13:OCSMM}. We use both linear and non-linear kernels where the Gaussian RBF kernel is employed as an embedding kernel \cite{Muandet12:SMM}. All hyper-parameters are chosen by 10-fold cross-validation. For our unsupervised problem, we repeat the experiments using several parameter settings and report the best results.

\begin{table}[t!]
  \centering
    \caption{The classification accuracy of SMM and the area under ROC curve (AUC) of OCSMM using different kernel mean estimators to construct the kernel on distributions.}
    \label{tab:smm-ocsmm}
    \begin{tabular}{|l|cc|cc|}
      \hline
      \multirow{2}{*}{\textbf{Estimator}} & \multicolumn{2}{c|}{\textbf{Linear}} & \multicolumn{2}{c|}{\textbf{Non-linear}} \\
      & SMM & OCSMM & SMM & OCSMM \\
      \hline
      KME & 0.5432 & 0.6955 & 0.6017 & 0.9085 \\
      S-KMSE & 0.5521 & 0.6970 & 0.6303 & 0.9105 \\
      F-KMSE & 0.5610 & 0.6970 & 0.6522 & 0.9095 \\
      \hline
    \end{tabular}
  \end{table}

Table \ref{tab:smm-ocsmm} reports the classification accuracy of SMM and the area under ROC curve (AUC) of OCSMM using different kernel mean estimators. Clearly, both shrinkage estimators lead to better performance on both SMM and OCSMM when compared to KME.

To summarize, we find sufficient evidence to conclude that both S-KMSE and F-KMSE outperforms the standard KME, especially when the dataset is small. The performance of S-KMSE and F-KMSE is very competitive. The difference depends on the dataset and the kernel function.


\section{Conclusions}

To conclude, we show that the commonly used kernel mean estimator can be improved upon via Stein's phenomenon. Our theoretical result suggests that there exists a wide class of kernel mean estimators that are better than the standard one. To demonstrate this, we focus on two efficient shrinkage estimators, namely, simple and flexible kernel mean shrinkage estimators. Empirical study clearly shows that the proposed estimators outperform the standard one in various scenarios, especially in a ``large $d$, small $n$'' paradigm. Most importantly, the shrinkage estimates not only provide more accurate estimation, but also lead to superior performance on real-world applications. 


\bibliographystyle{abbrv-unsrt}
\bibliography{kmse-arxiv}

\newpage
\appendix

\section{James-Stein's Estimator}

Stein's result has transformed common belief in statistical world that
the maximum likelihood estimator, which is in common use for more than
a century, is optimal. Charles Stein showed in 1955 that it is
possible to uniformly improve the maximum likelihood estimator (MLE)
for the Gaussian model in terms of total squared error risk when
several parameters are estimated simultaneously from independent
normal observations \cite{Stein55:Inadmissible}. James and Stein later proposed a particularly
simple estimator which dominates the usual MLE, given that there are
more than two parameters \cite{Stein61:JSE}.

The following proposition gives a general form of the James-Stein's estimator.

\begin{proposition}
  Assuming $X\sim\mathcal{N}(\bm{\theta},\sigma^2\id)$ with $dim(X) \geq 3$,
  the estimator $\delta(X)=X$ for $\bm{\theta}$ is inadmissible under
  the squared loss function and is dominated by the following
  estimator
  \begin{equation*}
    \delta_{JS}(X) = \left(1 - \frac{(d-2)\sigma^2}{\|X\|^2}\right)X
  \end{equation*}
  \noindent where $d$ is the dimension of $X$.
\end{proposition}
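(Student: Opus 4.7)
The plan is to compare the risks of $\delta(X) = X$ and $\delta_{JS}(X)$ directly under the squared loss $L(\theta, \delta) = \|\delta - \theta\|^2$. The risk of $\delta(X) = X$ is immediate from the Gaussian assumption: $R(\theta, X) = \mathbb{E}\|X - \theta\|^2 = d\sigma^2$. For the James-Stein estimator I would split $\delta_{JS}(X) - \theta = (X - \theta) - \frac{(d-2)\sigma^2}{\|X\|^2}X$, expand the squared norm, and take expectations to arrive at
\begin{equation*}
R(\theta, \delta_{JS}) = d\sigma^2 - 2(d-2)\sigma^2\, \mathbb{E}\left[\frac{(X - \theta)^\top X}{\|X\|^2}\right] + (d-2)^2 \sigma^4\, \mathbb{E}\left[\frac{1}{\|X\|^2}\right],
\end{equation*}
so the whole problem reduces to evaluating the cross term.

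The key tool is Stein's identity for the Gaussian: for any sufficiently regular vector field $g: \mathbb{R}^d \to \mathbb{R}^d$, integration by parts against the density of $\mathcal{N}(\theta, \sigma^2 \id)$ yields
\begin{equation*}
\mathbb{E}\left[(X - \theta)^\top g(X)\right] = \sigma^2\, \mathbb{E}\left[\nabla \cdot g(X)\right].
\end{equation*}
Applying this to $g(x) = x/\|x\|^2$, a direct computation gives $\nabla \cdot g(x) = (d-2)/\|x\|^2$, which cancels one factor of $(d-2)$ against the $-2(d-2)\sigma^2$ in front and collapses the risk expression to
\begin{equation*}
R(\theta, \delta_{JS}) = d\sigma^2 - (d-2)^2 \sigma^4\, \mathbb{E}\left[\frac{1}{\|X\|^2}\right].
\end{equation*}
Since $\mathbb{E}[1/\|X\|^2]$ is strictly positive, this already yields $R(\theta, \delta_{JS}) < R(\theta, X) = d\sigma^2$ uniformly in $\theta$, proving strict domination and hence inadmissibility of $\delta(X) = X$.

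The main obstacle is justifying Stein's identity for the singular vector field $g(x) = x/\|x\|^2$, since the integration by parts is only formal when $g$ fails to be smooth at the origin. One must verify that $\mathbb{E}[1/\|X\|^2] < \infty$ and that the interior singularity does not spoil the identity. The finiteness reduces to a tail estimate of the form $\int_0^1 r^{d-3}\,dr < \infty$ coming from spherical coordinates near the origin of the noncentral chi-square density, which is exactly where the hypothesis $d \geq 3$ enters. The cleanest way to make the integration by parts rigorous is to mollify, replacing $\|x\|^2$ by $\|x\|^2 + \varepsilon$, apply Stein's identity to the resulting smooth $g_{\varepsilon}$, and pass to the limit $\varepsilon \downarrow 0$ via monotone or dominated convergence. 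Once this technical point is handled, the risk comparison is automatic and yields the claimed domination for every $\theta \in \mathbb{R}^d$.
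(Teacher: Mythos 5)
Your proof is correct, and it is the classical argument; note that the paper itself does not prove this proposition at all --- it states it in the appendix as background on the James--Stein estimator and defers to the original references (Stein 1955, James--Stein 1961), so there is no internal proof to compare against. Your route is exactly the standard one those references (and textbook treatments) use: write $\delta_{JS}(X)-\bm{\theta}=(X-\bm{\theta})-\tfrac{(d-2)\sigma^2}{\|X\|^2}X$, expand the risk, and evaluate the cross term with Stein's identity applied to $g(x)=x/\|x\|^2$, whose divergence is $(d-2)/\|x\|^2$, giving $\mathcal{R}(\bm{\theta},\delta_{JS})=d\sigma^2-(d-2)^2\sigma^4\,\mathbb{E}\bigl[1/\|X\|^2\bigr]<d\sigma^2$ uniformly in $\bm{\theta}$, hence strict domination and inadmissibility of $\delta(X)=X$. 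You also correctly identify the only genuinely delicate points: $\mathbb{E}[1/\|X\|^2]<\infty$ precisely because $d\geq 3$ (the radial integral $\int_0^1 r^{d-3}\,dr$ near the origin), and the integration by parts must be justified despite the singularity of $g$ at $0$ --- your mollification $g_\varepsilon(x)=x/(\|x\|^2+\varepsilon)$ with a dominated-convergence passage works, since the mollified divergence is bounded by a constant multiple of $1/\|x\|^2$, which is integrable for $d\geq 3$. The one cosmetic slip is the phrase ``cancels one factor of $(d-2)$'': after Stein's identity the cross term is $-2(d-2)^2\sigma^4\,\mathbb{E}[1/\|X\|^2]$, and it is its combination with the quadratic term $+(d-2)^2\sigma^4\,\mathbb{E}[1/\|X\|^2]$ that produces the final coefficient $-(d-2)^2$; your displayed end result is nevertheless the right one.
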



Although the original works on James-Stein's estimator were entirely
written from the frequentist point of view, it was shown later that
James-Stein's estimator can be understood as an empirical Bayes
estimator \cite{Efron73:Bayes}. This is a treatment of James-Stein's
estimator from the Bayesian point of view. There have been a
considerable number of works in this direction, e.g.,
\cite{Efron72:Bayes,Efron73b:Bayes,Efron75:Bayes} and later by
\cite{Berger75:Minimax}, \cite{Bock75:Minimax},
\cite{Hudson78:Bayes}. Whether the same Bayesian interpretation is possible in an infinite-dimensional space such as the RKHS is still an
open problem.

The James-Stein's estimator is a special case of a larger class of estimators known as \emph{shrinkage estimator} \cite{Gruber98:Shrinkage}. In its most general form, the shrinkage estimator averages two different models: a high-dimensional model with low bias and high variance, and a lower dimensional model with larger bias but smaller variance. For example, one might consider the following estimator:
\begin{equation*}
  \hat{\theta}_{shrink} = \lambda\tilde{\theta} + (1-\lambda)\hat{\theta}_{ML}
\end{equation*}
\noindent where $\lambda \in [0,1]$, $\hat{\theta}_{ML}$ denotes
the usual maximum likelihood estimate of $\theta$, and
$\tilde{\theta}$ is an arbitrary point in the input space. In the case of
James-Stein's estimator, we have $\tilde{\theta}=\mathbf{0}$. That is,
it shrinks the usual estimator toward zero.

\section{Kernel Mean Shrinkage Estimator}

We give a detailed derivation of both simple kernel mean shrinkage
estimator (S-KMSE) and flexible kernel mean shrinkage estimator
(F-KMSE). Firstly, note that the loss we define in Section
\ref{sec:inadmissibility} is given by
\begin{equation}
  \label{eq:loss1}
  \ell(\mu,g) := \|\mu-g\|_{\hbspace}^2 = \|\ep[\phi(x)] - g\|^2_{\hbspace} .
\end{equation}
By Jensen's inequality, we can upper bound \eqref{eq:loss1} by the
loss functional
\begin{equation}
  \label{eq:loss2}
  \|\ep[\phi(x)] - g\|^2_{\hbspace} \leq \ep\|\phi(x) -
  g\|^2_{\hbspace} =: \mathcal{E}(g).
\end{equation}
Both \eqref{eq:loss1} and \eqref{eq:loss2} have a minimum at the same
$g$. In this paper, we formulate the problem in term of the loss
functional \eqref{eq:loss2} as it simplifies the analysis of
leave-one-out cross-validation score.

Given an i.i.d. sample $x_1,x_2,\ldots,x_n$, the KMSE can be obtained by minimizing the following loss functional
\begin{equation}
  \label{eq:kmse-loss}
  \widehat{\mathcal{E}}_{\lambda}(g) := \frac{1}{2n}\sum_{i=1}^n\left\|\phi(x_i)
    - g\right\|^2_{\hbspace} + \lambda\Omega(\|g\|),
\end{equation}
Different choices of $\Omega$ lead to different estimators, as outlined below.

\subsection{Simple Shrinkage}

By representer theorem, the solution of \eqref{eq:kmse-loss} can be written as
$g=\sum_{i=1}^n\beta_i\phi(x_i)$ for some $\bvec\in\rr^n$. Moreover,
the S-KMSE uses $\Omega(g) = \|g\|^2_{\hbspace}$. Substituting both
$g=\sum_{i=1}^n\beta_i\phi(x_i)$ and $\Omega(\|g\|) = \|g\|^2$ into \eqref{eq:kmse-loss} yields
\begin{equation}
  \label{eq:skmse-loss}
  \widehat{\mathcal{E}}_{\lambda}(\bvec) = \frac{1}{2n}\sum_{i=1}^n\left\| \phi(x_i) -
    \sum_{j=1}^n\beta_j \phi(x_j)\right\|_{\hbspace}^2 +
  \frac{\lambda}{2}\left\|\sum_{j=1}^n\beta_j
    \phi(x_j)\right\|^2_{\hbspace} .
\end{equation}
We can write \eqref{eq:skmse-loss} in term of the kernel function as
\begin{eqnarray*}
  \widehat{\mathcal{E}}_{\lambda}(\bvec) &=&
  \frac{1}{2n}\sum_{i=1}^n\left[ k(x_i,x_i) - 2\sum_{j=1}^n\beta_j
    k(x_j,x_i) + \sum_{j=1}^n\sum_{k=1}^n\beta_j\beta_k
    k(x_j,x_k)\right] + \frac{\lambda}{2}\bvec^{\top}\kmat\bvec \\
  &=& \frac{1}{2n}\sum_{i=1}^n k(x_i,x_i) -
  \frac{1}{n}\sum_{i,j=1}^n\beta_j k(x_j,x_i) +
  \frac{1}{2n}\sum_{i,j,k=1}^n\beta_j\beta_k k(x_j,x_k) +
  \frac{\lambda}{2} \bvec^{\top}\kmat\bvec \\
  &=& \frac{1}{2n}\mathrm{trace}(\kmat) - \bvec^\top\kmat\mathbf{1}_n
  + \frac{1}{2}\bvec^\top\kmat\bvec +
  \frac{\lambda}{2} \bvec^{\top}\kmat\bvec \\
  &=& \frac{1}{2n}\mathrm{trace}(\kmat) - \bvec^\top\kmat\mathbf{1}_n
  + \frac{1}{2}\bvec^\top(\kmat + \lambda\kmat)\bvec
\end{eqnarray*}
Taking the derivative of $\widehat{\mathcal{E}}_{\lambda}(\bvec)$ w.r.t. the vector $\bvec$ and setting
it to zero yield the optimal weight vector
\begin{equation*}
  \bvec = \left(\frac{1}{1+\lambda}\right)\mathbf{1}_n.
\end{equation*}
Consequently, the shrinkage estimator of the kernel mean is given by
\begin{equation*}
  \widehat{\mu}_{\lambda} = \sum_{i=1}^n\beta_i\phi(x_i) 
= \left(\frac{1}{1+\lambda}\right)\widehat{\mu}
= (1-\alpha)\widehat{\mu}
\end{equation*}
\noindent where $\alpha := \lambda/(1+\lambda) < 1$ and $\widehat{\mu}$ denotes the standard kernel mean estimator.

\subsection{Flexible Shrinkage}

Similarly, the flexible KMSE is obtained by minimizing 
\begin{equation*}
 \widehat{\mathcal{E}}_{\lambda}(\bvec) = \frac{1}{2n}\sum_{i=1}^n\left\| \phi(x_i) - \sum_{j=1}^n\beta_j \phi(x_j)\right\|_{\hbspace}^2 + \frac{\lambda}{2}\bvec^{\top}\bvec
\end{equation*}
\noindent with respect to the weight vector $\bvec\in\rr^n$. It can be
rewritten in term of the kernel function as
\begin{eqnarray*}
 \widehat{\mathcal{E}}_{\lambda}(\bvec) &=& \frac{1}{2n}\mathrm{trace}(\kmat) - \bvec^\top\kmat\mathbf{1}_n
  + \frac{1}{2}\bvec^\top\kmat\bvec +
  \frac{\lambda}{2} \bvec^{\top}\bvec \\ 
  &=& \frac{1}{2n}\mathrm{trace}(\kmat) - \bvec^\top\kmat\mathbf{1}_n
  + \frac{1}{2}\bvec^\top(\kmat + \lambda\id)\bvec
\end{eqnarray*}
Taking the derivative of $\widehat{\mathcal{E}}_{\lambda}(\bvec)$ with respect to $\bvec$ and setting it to zero yield
\begin{align*}
  \frac{\partial \widehat{\mathcal{E}}_{\lambda}}{\partial\bvec} = 0 \Rightarrow  -\kmat\mathbf{1}_n + (\kmat + \lambda\mathbf{I})\bvec &= 0 \\
   (\kmat + \lambda\mathbf{I})\bvec &= \kmat\mathbf{1}_n \\
  \bvec &= (\kmat + \lambda\mathbf{I})^{-1}\kmat\mathbf{1}_n
\end{align*}

\noindent where $\mathbf{1}_n$ denotes an $n\times 1$ vector whose elements are all $1/n$.

\section{Proof of Theorem \ref{thm:akmse-loocv}}

In this section we adopt the approach similar to the one presented in \cite{Green94:NREG} for ridge regression problem. For a given shrinkage parameter $\lambda$, let us consider the observation $x_i$ as being a new observation by omitting it from the dataset. Denote by $\widehat{\mu}^{(-i)}_{\lambda} = \sum_{j\neq i}\beta_j^{(-i)}\phi(x_j)$ the kernel mean estimated from the remaining data, using the value $\lambda$ as a shrinkage parameter, so that $\bvec^{(-i)}$ is the minimizer of 
\begin{equation*}
  \sum_{j\neq i} \left\|\phi(x_j) - \sum_{k\neq
      i}\beta_k\phi(x_k)\right\|^2_{\hbspace} + \lambda\|\bvec\|^2 .
\end{equation*}
We will measure the quality of $\widehat{\mu}^{(-i)}_{\lambda}$ by how well it approximates $\phi(x_i)$. The overall quality of the estimate can be quantified by the cross-validation score function
\begin{equation*}
  LOOCV(\lambda) = \frac{1}{n}\sum_{i=1}^n\left\| \phi(x_i) -
    \widehat{\mu}_{\lambda}^{(-i)}\right\|^2_{\hbspace} .
\end{equation*}
Note that the vector $\bvec^{(-i)}$ has length $n-1$, whereas the
original vector $\bvec$ has length $n$. To simplify the following
analysis, we will assume that $\bvec^{(-i)}$ has length $n$ with
$\beta_i=0$. Note that this representation does not alter the
leave-one-out estimate $\widehat{\mu}^{(-i)}_{\lambda}$.

\begin{thm-hand}[\ref{thm:akmse-loocv}.]
  The LOOCV score of F-KMSE satisfies 
  \begin{equation*}
  LOOCV(\lambda) =
  \frac{1}{n}\sum_{i=1}^n (\bvec^{\top}\kmat - \kmat_{\cdot i})^\top
  \mathbf{C}_{\lambda}(\bvec^{\top}\kmat - \kmat_{\cdot i})
  \end{equation*}
  \noindent where $\bvec$ is the weight vector calculated from the full dataset with the shrinkage parameter $\lambda$ and $\mathbf{C}_{\lambda} = (\kmat - \frac{1}{n}\kmat(\kmat+\lambda\id)^{-1}\kmat)^{-1} \kmat (\kmat - \frac{1}{n}\kmat(\kmat + \lambda \id)^{-1}\kmat)^{-1}$.
\end{thm-hand}

Note that the leave-one-out cross-validation score in Theorem
\ref{thm:akmse-loocv} does not depend on the leave-one-out solution
$\bvec_{\lambda}^{(-i)}$, but depends only on the non-leave-one-out
solution $\bvec_{\lambda}$. Consequently, the overall score can be computed efficiently.

\begin{proof}[Proof of Thorem \ref{thm:akmse-loocv}]

To prove Theorem \ref{thm:akmse-loocv}, we first show that the
leave-one-out solution $\bvec_{\lambda}^{(-i)}$ can be obtained via
the standard formulation with modified target vector.

\begin{lemma}
  \label{lem:kmse-target}
  For fixed $\lambda$ and $i$, let $\bvec^{(-i)}$ denote the vector
  with components $\beta^{(-i)}_j$ for $j\neq i$. Let us define a vector $\Phi^* = [\phi(x_1),\ldots,\phi(x_{i-1}),\widehat{\mu}^{(-i)}_{\lambda},\phi(x_{i+1}),\ldots,\phi(x_n)]^{\top}$ and a matrix $\mathbf{B}^*_{ml} = \langle \phi(x_m), \Phi^*_l\rangle_{\hbspace}$. Then $\bvec^{(-i)} = (\kmat + \lambda \id)^{-1}\mathbf{B}^*\mathbf{1}_n$.
\end{lemma}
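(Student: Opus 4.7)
The plan is to adapt the replacement trick behind the classical Green--Silverman leave-one-out formula for ridge-type estimators: if we refit the full-sample F-KMSE problem after substituting the omitted observation $\phi(x_i)$ by its own leave-one-out prediction $\widehat{\mu}^{(-i)}_{\lambda}$, the inserted residual is zero by construction, so the leave-one-out weight vector must remain a critical point of the modified full-sample problem and hence, by convexity, its unique minimizer. Since that modified problem is itself a standard F-KMSE problem whose normal equation we can write down immediately, this yields the claimed closed form directly.

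Concretely, first I would introduce the auxiliary full-sample objective
\[
\widetilde{\mathcal{E}}_{\lambda}(\bvec)\;:=\;\frac{1}{n}\sum_{j=1}^n \bigl\|\Phi^{*}_j - \Phi^{\top}\bvec\bigr\|^2_{\hbspace} + \lambda\|\bvec\|^2,
\]
expand the quadratic using $\langle \Phi^{*}_j,\phi(x_k)\rangle = \mathbf{B}^{*}_{kj}$, and set the gradient to zero to obtain the normal equation $(\kmat+\lambda\id)\bvec = \mathbf{B}^{*}\mathbf{1}_n$, whose unique solution (by strict convexity, since $\lambda>0$) is $\bvec^{\star}:=(\kmat+\lambda\id)^{-1}\mathbf{B}^{*}\mathbf{1}_n$. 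Second, I would split $\widetilde{\mathcal{E}}_{\lambda}(\bvec) = \widehat{\mathcal{E}}^{(-i)}_{\lambda}(\bvec) + \tfrac{1}{n}\|\widehat{\mu}^{(-i)}_{\lambda} - \Phi^{\top}\bvec\|^2_{\hbspace}$ and verify first-order optimality of $\bvec^{(-i)}$ for $\widetilde{\mathcal{E}}_{\lambda}$: the gradient of $\widehat{\mathcal{E}}^{(-i)}_{\lambda}$ vanishes at $\bvec^{(-i)}$ by its defining optimality property, and the gradient of the added phantom term, $-\tfrac{2}{n}\Phi(\widehat{\mu}^{(-i)}_{\lambda} - \Phi^{\top}\bvec^{(-i)})$, also vanishes because $\Phi^{\top}\bvec^{(-i)} = \widehat{\mu}^{(-i)}_{\lambda}$. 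Strict convexity of $\widetilde{\mathcal{E}}_{\lambda}$ then forces $\bvec^{(-i)} = \bvec^{\star}$, which is the identity claimed by the lemma.

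The main delicate point is purely bookkeeping: the native $\bvec^{(-i)}$ lives in $\rr^{n-1}$ while the right-hand side of the identity lives in $\rr^n$, and the phantom-term argument needs $\Phi^{\top}\bvec^{(-i)} = \widehat{\mu}^{(-i)}_{\lambda}$ to hold as an equality in $\hbspace$. I would therefore adopt the convention anticipated just before the lemma and work with $\bvec^{(-i)}$ as a length-$n$ representation of the leave-one-out estimator over the full (redundant) basis $\{\phi(x_1),\ldots,\phi(x_n)\}$, so that the displayed identity is read coordinate-wise in $\rr^n$. Once this convention is pinned down, the two gradient computations above are short and routine, and there is no further obstacle.
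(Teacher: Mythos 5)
Your overall route is the same replacement trick the paper uses (the paper argues via a value-comparison inequality chain, you via stationarity plus strict convexity), so methodologically you are aligned with the original proof. However, your second gradient computation has a genuine gap in coordinate $i$ under the zero-padding convention you explicitly adopt. The leave-one-out problem that defines $\bvec^{(-i)}$ excludes $\phi(x_i)$ from the expansion, so its optimality conditions control only the partial derivatives with respect to $\beta_j$, $j\neq i$. The $i$-th partial of $\widehat{\mathcal{E}}^{(-i)}_{\lambda}$ at the zero-padded vector is
\begin{equation*}
  \frac{\partial \widehat{\mathcal{E}}^{(-i)}_{\lambda}}{\partial \beta_i}\Big|_{\bvec^{(-i)}}
  \;=\; -\frac{2}{n}\Big\langle \phi(x_i),\; \sum_{j\neq i}\big(\phi(x_j)-\widehat{\mu}^{(-i)}_{\lambda}\big)\Big\rangle_{\hbspace},
\end{equation*}
which is generically nonzero when $\lambda>0$, because the shrunk estimate $\widehat{\mu}^{(-i)}_{\lambda}$ is not the plain average of the held-in points. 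Since the phantom term contributes nothing in that coordinate (its residual vanishes at $\bvec^{(-i)}$), the padded $\bvec^{(-i)}$ is \emph{not} a stationary point of $\widetilde{\mathcal{E}}_{\lambda}$, and in fact the padded identity cannot hold literally: for $n=2$, $i=2$, both columns of $\mathbf{B}^*$ are proportional to $(k(x_1,x_1),k(x_1,x_2))^\top$, and the second entry of $(\kmat+\lambda\id)^{-1}\mathbf{B}^*\mathbf{1}_n$ is a positive multiple of $\lambda\, k(x_1,x_2)\neq 0$, whereas the padded $\bvec^{(-2)}$ has a zero there.

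The fix is not bookkeeping but a change of definition: the lemma (and your argument, essentially verbatim) is correct if $\bvec^{(-i)}$ is taken to be the minimizer over all of $\rr^n$ of the leave-one-out objective, i.e.\ only the $i$-th \emph{loss term} is dropped while $\phi(x_i)$ is retained in the dictionary, with $\widehat{\mu}^{(-i)}_{\lambda}=\Phi^\top\bvec^{(-i)}$; then the defining optimality is over $\rr^n$, both gradient computations go through, and strict convexity gives uniqueness. (This is also the reading under which the lemma is subsequently used in the proof of Theorem \ref{thm:akmse-loocv}.) To be fair, the paper's own proof glosses over exactly the same point: its middle inequality is asserted ``by the definition of $\bvec^{(-i)}$'' for arbitrary $\bvec\in\rr^n$, although that definition yields optimality only among vectors supported off coordinate $i$. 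So your write-up must state the $\rr^n$ version of the leave-one-out problem explicitly (and keep its loss-to-$\lambda$ normalization identical to the full problem's, or even the $j\neq i$ stationarity conditions drift); as currently written, the claimed vanishing of the full gradient is false in coordinate $i$.
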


\begin{proof}
  For any vector $\bvec$,
  \begin{eqnarray*}
    \sum_{j=1}^n\left\|\Phi^*_j - \sum_{k=1}^n\beta_k\phi(x_k)\right\|^2_{\hbspace} + \lambda\|\bvec\|^2 &\geq&  \sum_{j\neq i}\left\|\Phi^*_j - \sum_{k=1}^n\beta_k\phi(x_k)\right\|^2_{\hbspace} + \lambda\|\bvec\|^2 \\
    &\geq& \sum_{j\neq i}\left\|\Phi^*_j - \sum_{k=1}^n\beta^{(-i)}_k\phi(x_k)\right\|^2_{\hbspace} + \lambda\|\bvec^{(-i)}\|^2 \\
    &=& \sum_{j=1}^n\left\| \Phi^*_j - \sum_{k=1}^n\beta_k^{(-i)}\phi(x_k) \right\|^2_{\hbspace} + \lambda\|\bvec^{(-i)}\|^2
  \end{eqnarray*}
  \noindent by the definition of $\bvec^{(-i)}$ and the fact that $\Phi^*_i = \widehat{\mu}^{(-i)}_{\lambda}$. It follows that $\bvec^{(-i)}$ is the minimizer of $\sum_j\|\Phi^*_j - \sum_k\beta_k\phi(x_k)\|^2_{\hbspace} + \lambda\|\bvec\|^2$ so that $\bvec^{(-i)} = (\kmat + \lambda\id)^{-1}\mathbf{B}^*\mathbf{1}_n$, as required.
\end{proof}

As we can see, the resulted formulation of $\bvec^{(-i)}$ in Lemma
\ref{lem:kmse-target} depends on the leave-one-out solution
$\widehat{\mu}^{(-i)}_{\lambda}$ which in turn requires a knowledge of
$\bvec^{(-i)}$. As a result, we cannot use this formulation to compute $\bvec^{(-i)}$ in
practice. However, it will be very useful as an
intermediate step in deriving
the leave-one-out cross-validation score.

In the following, we will write $\mathbf{A}$ for
$(\kmat+\lambda\mathbf{I})^{-1}$ throughout. By virtue of Lemma \ref{lem:kmse-target}, we can write an expression for the deleted residual $\phi(x_i) - \widehat{\mu}^{(-i)}_{\lambda}$ as
\begin{align*}
  \widehat{\mu}^{(-i)}_{\lambda} - \phi(x_i) &= \sum_{j=1}^n\beta^{(-i)}_j\phi(x_j) - \phi(x_i) \\
  &= \frac{1}{n}\sum_{j=1}^n\sum_{m=1}^n\left\{\mathbf{A}\mathbf{B}^*\right\}_{jm}\phi(x_j) - \phi(x_i) \\
  &= \frac{1}{n}\sum_{j=1}^n\sum_{m\neq i} \left\{\mathbf{A}\kmat\right\}_{jm}\phi(x_j) + \frac{1}{n}\sum_{j=1}^n\sum_{l=1}^n\mathbf{A}_{jl}\mathbf{B}^*_{li}\phi(x_j) - \phi(x_i) \\
  &= \frac{1}{n}\sum_{j=1}^n\sum_{m\neq i} \left\{\mathbf{A}\kmat\right\}_{jm}\phi(x_j) + \frac{1}{n}\sum_{j=1}^n\sum_{l=1}^n\mathbf{A}_{jl}\langle \phi(x_l),\widehat{\mu}^{(-i)}_{\lambda}\rangle\phi(x_j) - \phi(x_i) \\
  &= \frac{1}{n}\sum_{j=1}^n\sum_{m=1}^n \left\{\mathbf{A}\kmat\right\}_{jm}\phi(x_j) - \phi(x_i) \\ 
  & \quad - \frac{1}{n}\sum_{j=1}^n\left\{\mathbf{A}\kmat\right\}_{ji}\phi(x_j) + \frac{1}{n}\sum_{j=1}^n\sum_{l=1}^n\mathbf{A}_{jl}\langle \phi(x_l),\widehat{\mu}^{(-i)}_{\lambda}\rangle\phi(x_j) \\
  &= \frac{1}{n}\sum_{j=1}^n\sum_{m=1}^n \left\{\mathbf{A}\kmat\right\}_{jm}\phi(x_j) - \phi(x_i) \\ 
  & \quad - \frac{1}{n}\sum_{j=1}^n\sum_{l=1}^n\mathbf{A}_{jl}\langle\phi(x_l),\phi(x_i)\rangle\phi(x_j) + \frac{1}{n}\sum_{j=1}^n\sum_{l=1}^n\mathbf{A}_{jl}\langle \phi(x_l),\widehat{\mu}^{(-i)}_{\lambda}\rangle\phi(x_j) \\
  &= \frac{1}{n}\sum_{j=1}^n\sum_{m=1}^n \left\{\mathbf{A}\kmat\right\}_{jm}\phi(x_j) - \phi(x_i) + \frac{1}{n}\sum_{j=1}^n\sum_{l=1}^n\mathbf{A}_{jl}\langle \phi(x_l),\widehat{\mu}^{(-i)}_{\lambda}-\phi(x_i)\rangle\phi(x_j) \\
  &= \widehat{\mu}_{\lambda} - \phi(x_i) + \frac{1}{n}\sum_{j=1}^n\sum_{l=1}^n\mathbf{A}_{jl}\langle \phi(x_l),\widehat{\mu}^{(-i)}_{\lambda}-\phi(x_i)\rangle\phi(x_j)
\end{align*}

Denote the deleted residual $\widehat{\mu}^{(-i)}_{\lambda} - \phi(x_i)$ by $\Delta_{\lambda}^{(-i)}$. Then, the above equation can be rewritten as
\begin{equation}
  \label{eq:newnotation}
  \Delta_{\lambda}^{(-i)} = \widehat{\mu}_{\lambda} - \phi(x_i) + \frac{1}{n}\sum_{j=1}^n\sum_{l=1}^n\mathbf{A}_{jl}\langle \phi(x_l),\Delta_{\lambda}^{(-i)}\rangle\phi(x_j) .
\end{equation}
Since the deleted residual $\Delta_{\lambda}^{(-i)}$ lies in the
subspace spanned by the samples $\phi(x_1),\ldots,\phi(x_n)$, we may write 
\begin{equation*}
  \Delta_{\lambda}^{(-i)} = \sum_{k=1}^n\xi_k\phi(x_k)
\end{equation*}
\noindent for some $\bm{\xi}\in \rr^n$. Substituting back into \eqref{eq:newnotation} yields
\begin{eqnarray*}
  \sum_{k=1}^n\xi_k\phi(x_k) &=& \widehat{\mu}_{\lambda}-\phi(x_i) + \frac{1}{n}\sum_{j,l}\mathbf{A}_{jl}\langle \phi(x_l),\sum_{k=1}^n\xi_k\phi(x_k)\rangle\phi(x_j) \\
  &=& \widehat{\mu}_{\lambda}-\phi(x_i) + \frac{1}{n}\sum_{j,l}\mathbf{A}_{jl}\sum_{k=1}^n\xi_k\langle\phi(x_l),\phi(x_k)\rangle\phi(x_j) \\
  &=& \widehat{\mu}_{\lambda}-\phi(x_i) + \frac{1}{n}\sum_{j,l}\mathbf{A}_{jl}\sum_{k=1}^n\xi_k\kmat_{lk}\phi(x_j) \\
  &=& \widehat{\mu}_{\lambda}-\phi(x_i) + \frac{1}{n}\sum_{j=1}^n\sum_{k=1}^n \sum_{l=1}^n \mathbf{A}_{jl}\kmat_{lk}\xi_k\phi(x_j) \\
  &=& \widehat{\mu}_{\lambda}-\phi(x_i) + \frac{1}{n}\sum_{j=1}^n\sum_{k=1}^n \left\{\mathbf{A}\kmat\right\}_{jk}\xi_k\phi(x_j) \\
  &=& \widehat{\mu}_{\lambda}-\phi(x_i) + \frac{1}{n}\sum_{j=1}^n \left\{\mathbf{A}\kmat\bm{\xi}\right\}_j\phi(x_j)
\end{eqnarray*}
By taking the inner product on both sides of the equation with respect
to the samples $\phi(x_1),\ldots,\phi(x_n)$, the optimal $\bm{\xi}$
can be obtained by solving the system of equations:
\begin{eqnarray*}
  \kmat\bm{\xi} &=& \bvec^\top\kmat - \kmat_{\cdot i} + \frac{1}{n}\kmat\mathbf{A}\kmat\bm{\xi} \\
  (\kmat - \frac{1}{n}\kmat\mathbf{A}\kmat)\bm{\xi} &=&
  \bvec^\top\kmat - \kmat_{\cdot i} \\
  \bm{\xi} &=& (\kmat -
  \frac{1}{n}\kmat\mathbf{A}\kmat)^{-1}(\bvec^\top\kmat - \kmat_{\cdot
    i}),
\end{eqnarray*}
\noindent where $\kmat_{\cdot i}$ denotes the $i$th column of matrix $\kmat$. Consequently, the leave-one-out cross-validation score for the sample $x_i$ can be computed by
\begin{eqnarray*}
  \left\|\Delta_{\lambda}^{(-i)}\right\|^2_{\hbspace} &=&
  \bm{\xi}^\top\kmat\bm{\xi} \\
  &=& (\bvec^\top\kmat - \kmat_{\cdot
    i})^\top (\kmat - \frac{1}{n}\kmat\mathbf{A}\kmat)^{-1} \kmat
  (\kmat - \frac{1}{n}\kmat\mathbf{A}\kmat)^{-1}(\bvec^\top\kmat -
  \kmat_{\cdot i}) \\
  &=& (\bvec^\top\kmat - \kmat_{\cdot i})^\top
  \mathbf{C}_{\lambda}(\bvec^\top\kmat - \kmat_{\cdot i})
\end{eqnarray*}
\noindent where $\mathbf{C}_{\lambda} = (\kmat - \frac{1}{n}\kmat\mathbf{A}\kmat)^{-1} \kmat
  (\kmat - \frac{1}{n}\kmat\mathbf{A}\kmat)^{-1}$. Hence, we have the score over full dataset 
\begin{equation*}
  LOOCV(\lambda) = \frac{1}{n}\sum_{i=1}^n
  \left\|\Delta_{\lambda}^{(-i)}\right\|^2_{\hbspace} =
  \frac{1}{n}\sum_{i=1}^n (\bvec^\top\kmat - \kmat_{\cdot i})^\top
  \mathbf{C}_{\lambda}(\bvec^\top\kmat - \kmat_{\cdot i})
\end{equation*}
as required.
\end{proof}

\section{Shrinkage Centering in Feature Space}

In many applications of kernel methods, it is often assumed that the kernel is centered. That is, the feature map of the data in feature space is given by
\begin{equation*}
  \tilde{\phi}(x) = \phi(x) - \ep[\phi(x)] .
\end{equation*}
In practice, the feature mean $\ep[\phi(X)]$ is approximated using the empirical average $\frac{1}{n}\sum_{i=1}^n\phi(x_i)$ such that the centered feature map can be written as
\begin{equation*}
  \tilde{\phi}(x) = \phi(x) - \frac{1}{n}\sum_{i=1}^n\phi(x_i) .
\end{equation*}
However, it is very difficult to explicitly center the data because
the feature space can be high-dimensional, if not infinite. In
\cite{Scholkopf98:NCA}, it is shown that we can compute the centered kernel in terms of the non-centered kernel alone. 

A direct application of our shrinkage estimators is to replace the empirical average in the above formulation by its shrinkage version, i.e., 
\begin{equation*}
  \tilde{\phi}(x) = \phi(x) - \sum_{i=1}^n\beta_i\phi(x_i)
\end{equation*}
and thereby the centered kernel $\kmat^c$ can be written as
\begin{eqnarray*}
  \kmat^c_{ij} &=& \left(\phi(x_i) - \sum_{k=1}^n\beta_k\phi(x_k)\right) ^{\top}\left(\phi(x_j) - \sum_{k=1}^n\beta_k\phi(x_k)\right) \\
  &=& \phi(x_i) ^\top\phi(x_j) - \phi(x_i)^\top\left[\sum_{k=1}^n\beta_k\phi(x_k)\right] - \left[\sum_{l=1}^n\beta_l\phi(x_l)^\top\right]\phi(x_j) \\
  && + \left[\sum_{k=1}^n\beta_k\phi(x_k)^\top\right]\left[\sum_{l=1}^n\beta_l\phi(x_l)\right] \\
  &=& \kmat_{ij} - \bvec^\top\kmat_{\cdot i} - \kmat_{\cdot j}^\top\bvec + \bvec^\top\kmat\bvec,
\end{eqnarray*}
\noindent where $\bvec$ is obtained from the shrinkage
estimators. Defining an $n\times n$ matrix $\mathbf{B} =
[\bvec,\bvec,\ldots,\bvec]$, we can write a compact expression of
centering operation as
\begin{equation*}
  \kmat^c = \kmat - \mathbf{B}^\top\kmat - \kmat\mathbf{B} + \mathbf{B}^{\top}\kmat\mathbf{B}.
\end{equation*}

Consider a set of test points $x^*_1,x^*_2,\ldots,x^*_m$ and define an $m\times n$ test kernel matrix by 
\begin{equation*}
  \mathbf{L}_{ij} = \langle \phi(x^*_i),\phi(x_j)\rangle_{\hbspace}.
\end{equation*}
Thus, the centered test kernel matrix can be similarly obtained as
\begin{equation*}
  \mathbf{L}^c = \mathbf{L} - \mathbf{B}_t\kmat - \mathbf{L}\mathbf{B} + \mathbf{B}_t\kmat\mathbf{B}
\end{equation*}
\noindent where $\mathbf{B}_t = [\bvec,\bvec,\ldots,\bvec]^{\top}$ denotes an
$m\times n$ matrix.

\section{Covariance-operator Shrinkage Estimator}

We can extend the idea to improving the estimation of cross-covariance operator on the RKHS. It is a foundation to several kernel-based approaches such as kernel PCA, kernel Fisher discriminant analysis, and kernel CCA. The covariance operator can be seen as a mean function in the joint space.

Let $(\hbspace_X,k_X)$ and $(\hbspace_Y,k_Y)$ be the RKHS of functions on measurable space $\inspace$ and $\mathcal{Y}$, respectively, with positive definite kernel $k_X$ and $k_Y$ (with feature map $\phi$ and $\varphi$). In this section, we will consider a random vector $(X,Y):\Omega \rightarrow \inspace\times\mathcal{Y}$ with distribution $\pp{P}_{XY}$. The marginal distributions of $X$ and $Y$ are denoted by $\pp{P}_X$ and $\pp{P}_Y$, respectively. We assume that $\ep_X[k_X(X,X)] < \infty$ and $\ep_Y[k_Y(Y,Y)] < \infty$.

One can show that there exists a unique cross-covariance operator $\Sigma_{YX}:\hbspace_X\rightarrow\hbspace_Y$ such that
\begin{equation*}
  \langle g,\Sigma_{YX}f\rangle_{\hbspace_Y} = \ep_{XY}[(f(X) - \ep_X[f(X)])(g(Y) - \ep_Y[g(Y)])] = Cov(f(X),g(Y))
\end{equation*}
holds for all $f\in\hbspace_X$ and $g\in\hbspace_Y$. If $X$ is equal to $Y$, we obtain the self-adjoint operator $\Sigma_{XX}$ called the covariance operator.

Given an i.i.d sample from $\pp{P}_{XY}$ written as $(x_1,y_1),(x_2,y_2),\ldots,(x_n,y_n)$, we can write the empirical cross-covariance operator $\widehat{\Sigma}_{YX}$ as
\begin{equation}
  \label{eq:emp-cco}
  \widehat{\Sigma}_{YX} := \frac{1}{n}\sum_{i=1}^n\phi(x_i)\otimes\varphi(y_i) - \widehat{\mu}_X\otimes\widehat{\mu}_Y
\end{equation}
\noindent where $\widehat{\mu}_X = \frac{1}{n}\sum_{i=1}^n\phi(x_i)$ and $\widehat{\mu}_Y = \frac{1}{n}\sum_{i=1}^n\varphi(y_i)$. Let assume that $\tilde{\phi}$ and $\tilde{\varphi}$ are the centered version of the feature map $\phi$ and $\varphi$, respectively. Then, the empirical cross-covariance operator \eqref{eq:emp-cco} can be rewritten as
\begin{equation*}
  \widehat{\Sigma}_{YX} := \frac{1}{n}\sum_{i=1}^n\tilde{\phi}(x_i)\otimes\tilde{\varphi}(y_i) ,
\end{equation*}
\noindent which can be obtained as a minimizer of the following loss functional:
\begin{equation}
  \label{eq:cose-loss}
  \widehat{\mathcal{E}}(g) := \frac{1}{n}\sum_{i=1}^n\left\| \tilde{\phi}(x_i)\otimes\tilde{\varphi}(y_i) - g\right\|^2_{\hbspace_X\otimes\hbspace_Y}, \quad g\in\hbspace_X\otimes\hbspace_Y \,.
\end{equation}

Assume that $g$ lies in the subspace spanned by the data, i.e., $g=\sum_{i=1}^n\beta_i\tilde{\phi}(x_i)\otimes\tilde{\varphi}(y_i)$. By the inner product property in product space, we have $\langle \tilde{\phi}(x)\otimes\tilde{\varphi}(y),\tilde{\phi}(x')\otimes\tilde{\varphi}(y')\rangle_{\hbspace_X\otimes\hbspace_Y} = \langle\tilde{\phi}(x),\tilde{\phi}(x')\rangle_{\hbspace_X}\langle\tilde{\varphi(y)},\tilde{\varphi(y')}\rangle_{\hbspace_Y} = k_X(x,x')k_Y(y,y')$.

Note that \eqref{eq:cose-loss} is of the same form as the kernel mean
estimator. As a result, we can apply the same analysis throughout.

\end{document}